\documentclass{article}

\PassOptionsToPackage{table}{xcolor}

\usepackage[accepted]{icml2026}

\makeatletter
\renewcommand{\ICML@appearing}{Accepted at the \textit{ICML 2026 Workshop
on Resource-Adaptive Foundation Model Inference (AdaptFM)}, Seoul, South
Korea. Copyright 2026 by the author(s).}
\makeatother

\usepackage[utf8]{inputenc}
\usepackage[T1]{fontenc}
\usepackage{textcomp}
\usepackage{microtype}
\usepackage{nicefrac}

\usepackage{amsmath}
\usepackage{amssymb}
\usepackage{amsfonts}
\usepackage{amsthm}
\usepackage{mathtools}

\usepackage{graphicx}
\usepackage{subcaption}
\usepackage{float}
\usepackage{wrapfig}
\usepackage{booktabs}
\usepackage{tabularx}
\usepackage{longtable}
\usepackage{array}
\usepackage{makecell}
\usepackage{multirow}
\usepackage{colortbl}

\usepackage{enumitem}

\usepackage{soul}
\usepackage{hyperref}
\usepackage[capitalize,noabbrev]{cleveref}

\AtBeginDocument{%
  \hypersetup{%
    colorlinks=false,
    pdfborder={0 0 1},
    citebordercolor={0 1 0},
    linkbordercolor={1 1 1},
    urlbordercolor={1 1 1},
    linkcolor={},
    citecolor={},
    urlcolor={},
  }%
}

\usepackage{balance}

\definecolor{errorred}{RGB}{230,200,255}
\definecolor{grammarorange}{RGB}{255,230,180}
\definecolor{collapsepurple}{RGB}{255,200,200}
\newcommand{\hlerror}[1]{\sethlcolor{errorred}\hl{#1}}
\newcommand{\hlgrammar}[1]{\sethlcolor{grammarorange}\hl{#1}}
\newcommand{\hlcollapse}[1]{\sethlcolor{collapsepurple}\hl{#1}}

\newcolumntype{L}[1]{>{\raggedright\arraybackslash}p{#1}}

\theoremstyle{plain}
\newtheorem{theorem}{Theorem}[section]
\newtheorem{proposition}[theorem]{Proposition}

\theoremstyle{definition}

\theoremstyle{remark}
\newtheorem{remark}[theorem]{Remark}

\definecolor{DarkGreen}{rgb}{0.0, 0.5, 0.0}

\usepackage{tikz}
\usetikzlibrary{positioning,arrows.meta}

\newcommand{\AIR}{\textit{AIR}}

\icmltitlerunning{Activation- and Influence-Aware Ranks (AIR)}

\begin{document}

\twocolumn[
\icmltitle{Activation- and Influence-Aware Ranks (AIR):\\
  Function-Preserving SVD Compression for LLMs}

\icmlsetsymbol{equal}{*}

\begin{icmlauthorlist}
\icmlauthor{Nico Harder}{hhi}
\icmlauthor{Daniel Becking}{hhi}
\icmlauthor{Karsten Mueller}{hhi}
\icmlauthor{Wojciech Samek}{hhi}
\end{icmlauthorlist}

\icmlaffiliation{hhi}{Fraunhofer HHI, Berlin, Germany}

\icmlcorrespondingauthor{Nico Harder}{nico.harder@hhi.fraunhofer.de}

\icmlkeywords{Machine Learning, LLM Compression, SVD, Low-Rank Approximation}

\vskip 0.3in

{\centering
\includegraphics[width=\textwidth]{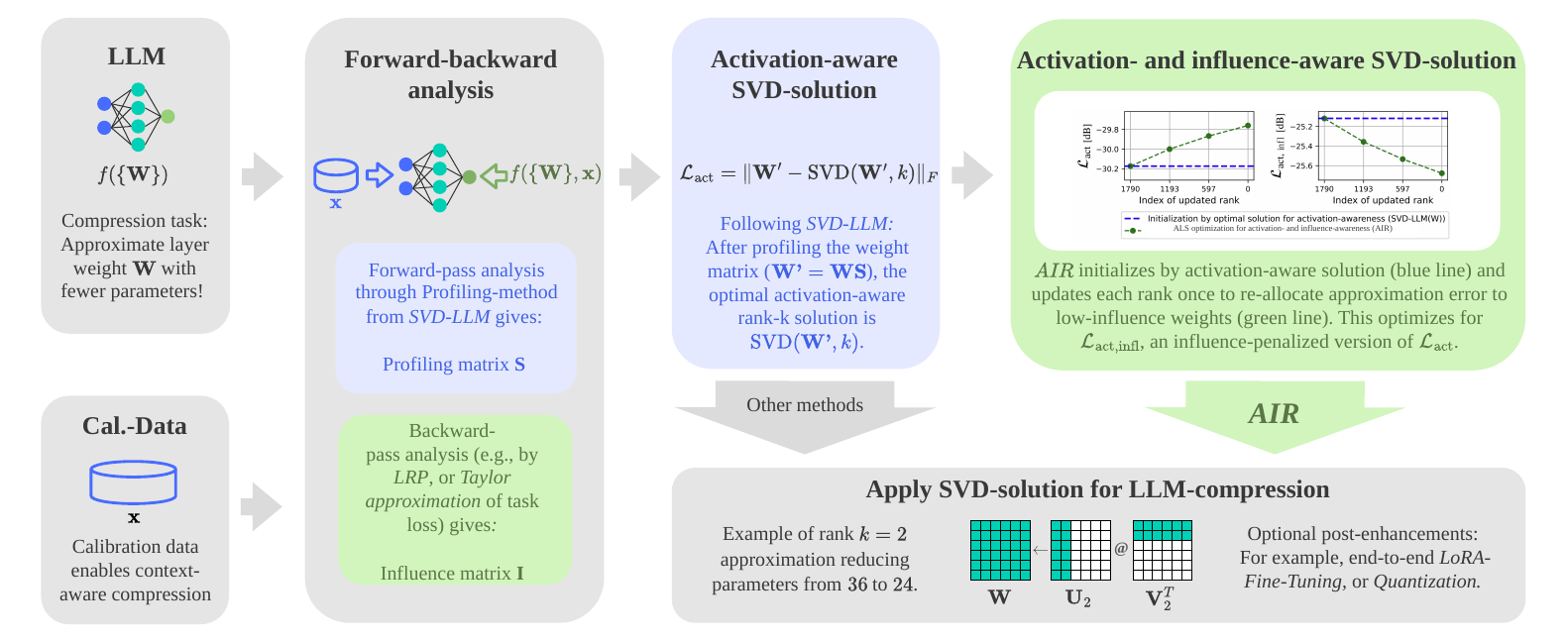}\par}
\vspace{4pt}
{\captionof{figure}{Schematic overview of activation- and influence-aware SVD-based LLM compression through \AIR{}.}\label{fig:rr_update}}

\vskip 0.2in
]

\printAffiliationsAndNotice{}  

\begin{abstract}
We present Activation- and Influence-Aware Ranks (\AIR{}), an SVD-based LLM compression framework that guides each weight matrix's low-rank approximation with a backward-signal influence metric. Starting from the activation-aware optimum of SVD-LLM(W), \AIR{} runs a single closed-form alternating least squares (ALS) sweep that integrates influence element-wise under a monotone-descent guarantee. \AIR{} is layer-local and composes orthogonally with end-to-end methods: alone it exceeds ACIP, and \AIR{}+LoRA outperforms it further. \AIR{} improves perplexity over SVD-LLM(W) by $>$18\% at $\leq$60\% parameter retention, matches its quality with $\sim$90\% less calibration data, and turns parameter savings into FLOP, peak-memory, and per-token latency gains.
\end{abstract}

\section{Introduction}
\label{sec:intro}
 
Large language models (LLMs) gain performance through scale~\cite{kaplan2020scalinglaws}, motivating post-training compression for efficient deployment under varying resource budgets. Singular value decomposition (SVD) is well-suited: transformer weights develop implicit low-rank structure during training~\cite{balzano2025overviewlowrankstructurestraining}, the Eckart--Young theorem~\cite{eckart1936approximation} gives a Frobenius-optimal rank-$k$ approximation, and the rank-hierarchy of singular values provides a built-in importance ordering at the finest granularity possible without calibration data. Vanilla SVD, however, ignores the functional role of each weight and degrades downstream performance severely. Function-preserving SVD methods address this with calibration data: \emph{forward-activation} approaches (ASVD~\cite{yuan2024asvdactivationawaresingularvalue}, SVD-LLM(W)~\cite{svd_llm}) are cheap and layer-local but agnostic to whether the preserved activations influence the final prediction; \emph{backward-signal} approaches expose this functional role yet have either underperformed activation-only baselines (FWSVD~\cite{fwsvd}) or required full end-to-end optimization (ACIP~\cite{genzel2025choosemodelsizecompression}). The gap is not \emph{whether} a backward signal helps, but \emph{how} it is integrated.

\begin{table*}[t!]
\centering
\caption{Comparison of SVD-based LLM compression methods across features and requirements.}
\label{tab:svd_methods_comparison}
\resizebox{\textwidth}{!}{%
\scriptsize
\begin{tabular}{l|cc|ccc|cc}
\toprule
& \multicolumn{2}{c|}{\textsc{Features}} & \multicolumn{3}{c|}{\textsc{Enhancements}} & \multicolumn{2}{c}{\textsc{Costs}} \\
\cmidrule(lr){2-3} \cmidrule(lr){4-6} \cmidrule(lr){7-8}
\textsc{Method} & \makecell{\textsc{Activation-}\\\textsc{Awareness}} & \makecell{\textsc{Influence-}\\\textsc{Awareness}} & \makecell{\textsc{Dyn. Rank}\\\textsc{Allocation}} & \makecell{\textsc{End-to-End}\\\textsc{Optimization}} & \textsc{Quantization} & \makecell{\textsc{Compute-}\\\textsc{Cost}} & \makecell{\textsc{Data-}\\\textsc{Cost}} \\
\midrule
\textcolor{gray}{Vanilla SVD} & \textcolor{gray}{$\times$} & \textcolor{gray}{$\times$} & \textcolor{gray}{Optional} & \textcolor{gray}{Optional} & \textcolor{gray}{Optional} & \textcolor{gray}{Very Low} & \textcolor{gray}{Zero} \\
\midrule
ACIP (\citeauthor{genzel2025choosemodelsizecompression})                          & $\checkmark$ & End-to-end  & Inherent & Inherent & Optional & High   & High   \\
ASVD, ASVD+ (\citeauthor{yuan2024asvdactivationawaresingularvalue})               & $\checkmark$ & $\times$    & Inherent & Optional & Optional & High   & Medium \\
FWSVD (\citeauthor{fwsvd})                                                        & $\times$     & Local proxy & Optional & Optional & Optional & Medium & Medium \\
SVD-LLM(W) (\citeauthor{svd_llm})                                                 & $\checkmark$ & $\times$    & Optional & Optional & Optional & Low    & Medium \\
SVD-LLM (\citeauthor{svd_llm})                                                    & $\checkmark$ & End-to-end  & Optional & Inherent & Optional & Low    & Medium \\
\midrule
\texttt{AIR (Ours)} & $\checkmark$ & Local proxy & Optional & Optional & Optional & Low & Very low \\
\bottomrule
\end{tabular}%
}
\end{table*}

\AIR{} (Activation- and Influence-Aware Ranks) closes this gap (Figure~\ref{fig:rr_update}). A forward-backward analysis yields the profiling matrix $\mathbf{S}$ from activations~\citep{svd_llm} and an element-wise influence matrix $\mathbf{I}$ from a backward signal. \AIR{} jointly optimizes a hybrid activation- and influence-aware objective: starting from $\mathrm{SVD}(\mathbf{W}\mathbf{S}, k)$, per-rank ALS updates redistribute approximation error away from high-influence weights, sidestepping the intractability of element-wise weighted low-rank approximation~\citep{srebro2003weighted} via closed-form per-rank updates with monotone descent. \textbf{Contributions.} \textbf{(i)} On LLaMA-7B, \AIR{} achieves 18\%/33\%/45\% lower WikiText-2 perplexity than SVD-LLM(W) at 60\%/40\%/20\% retention, matches it with $\sim$90\% less calibration data, and generalizes across LLaMA/Mistral/Vicuna families. \textbf{(ii)} A closed-form ALS mechanism for element-wise backward-signal integration that dominates the choice of signal (LRP-$\epsilon$, Weight$\times$Gradient, Fisher all equivalent) and is complementary to end-to-end optimization (\AIR{}+LoRA outperforms ACIP at every rate). \textbf{(iii)} System-level efficiency: 60\% retention turns into 64\% peak memory and 53\% per-token latency on a 40\,GB A100.

\section{Related Work}
\label{sec:related}
\label{subsec:svd_based_compression_for_llms}
\label{subsec:influence_for_compression}

We position \AIR{} against existing SVD-based compression methods (Table~\ref{tab:svd_methods_comparison}). Under \emph{Enhancements}, ``Inherent'' marks methods that bundle the enhancement as part of their core algorithm and ``Optional'' those that treat it as a complementary add-on. ASVD~\cite{yuan2024asvdactivationawaresingularvalue} and SVD-LLM~\cite{svd_llm} rely on \emph{forward activations}: ASVD scales weights by per-channel activation statistics, while SVD-LLM applies a whitening transformation that yields the optimal solution to an activation-aware objective. Methods using \emph{backward signals} include FWSVD~\cite{fwsvd}, which guides compression by row-wise Fisher information, and ACIP~\cite{genzel2025choosemodelsizecompression}, which uses $\ell_1$-regularized gradient descent to learn rank masks and LoRA adapters end-to-end. These differ in how they exploit the signal (Table~\ref{tab:svd_methods_comparison}'s Influence-Awareness column): as a \emph{layer-local proxy} guiding a closed-form solution (FWSVD), or through \emph{end-to-end} optimization (ACIP). The two strategies are not mutually exclusive: SVD-LLM and ACIP inherently bundle a subsequent end-to-end stage, while \AIR{}, FWSVD, and SVD-LLM(W) leave it as an optional, complementary post-step. SVD-LLM(W) (the whitening-only variant) already achieves competitive results and motivates our primary baseline. FWSVD is the closest prior approach to \AIR{}: it combines a backward signal with SVD as a local proxy, yet underperforms activation-aware baselines because (i) its objective uses no forward-pass signal and (ii) it aggregates influence row-wise, since the element-wise case admits no closed-form SVD solution~\cite{srebro2003weighted}. \AIR{}'s ALS updates sidestep this and preserve element-wise granularity. Appendix~\ref{sec:appendix_influence_metrics} organizes the broader family of backward-signal influence metrics (Weight$\times$Gradient~\citep{molchanov2017pruning,shrikumar2017learning}, empirical Fisher~\citep{fwsvd}, OBD/OBS~\citep{lecun1989obd,hassibi1992obs}, LRP~\citep{bach2015pixel,attnlrp,Becking_2022,Yeom_2021,hatefi2024pruningexplainingrevisitedoptimizing}) via the underlying Taylor expansion.

\section{AIR}
\label{sec:method}

This section describes the design of \AIR{} (Figure~\ref{fig:rr_update}); \AIR{} operates on individual layers and we omit layer-indexing. SVD-based compression decomposes a weight matrix $\mathbf{W}\in\mathbb{R}^{m\times n}$ into a rank-$k$ approximation $\mathbf{W}_k=\mathbf{U}_k\boldsymbol{\Sigma}_k\mathbf{V}_k^\top$ ($k\leq\min(m,n)$), Frobenius-optimal by Eckart--Young~\cite{eckart1936approximation}.

\subsection{Forward-backward analysis}

\paragraph{Activations in the forward pass}

Following SVD-LLM~\cite{svd_llm}, we collect hidden states $\mathbf{X}$ over the calibration data and apply a \emph{Profiling} preprocessing to the weight matrix via
\begin{equation}\label{eq:whitening}
\mathbf{W}' = \mathbf{W} \mathbf{S}, \quad \text{where } \mathbf{S} = \text{cholesky}\!\biggl(\sum_{\mathcal{D}_{\text{cal}}}\mathbf{X}\mathbf{X}^\top\biggr).
\end{equation}
Throughout this work, $'$ indicates a variable in the latent space obtained through right-multiplication by the profiling matrix $\mathbf{S}$, which we interpret as the product of the forward-pass analysis.

\paragraph{Influence from the backward pass.}
A backward signal exposes the functional role of each weight by reading its contribution from the model's output side. Since the task loss $\mathcal{L}$ has no tractable closed form in the compressed weights, the standard approach is a Taylor expansion of $\mathcal{L}$ around $\mathbf{W}$:
\begin{equation}\label{eq:taylor_main}
\begin{split}
\mathcal{L}(\hat{\mathbf{W}}_k) &\approx \underbrace{\mathcal{L}(\mathbf{W})}_{\text{0th}} + \underbrace{\nabla_\mathbf{W}\mathcal{L}(\mathbf{W})^\top(\hat{\mathbf{W}}_k-\mathbf{W})}_{\text{1st (slope)}} \\
&\quad+ \underbrace{\tfrac{1}{2}(\hat{\mathbf{W}}_k-\mathbf{W})^\top\mathbf{H}_\mathbf{W}(\hat{\mathbf{W}}_k-\mathbf{W})}_{\text{2nd (curvature)}} + \cdots,
\end{split}
\end{equation}
with the 0th-order term constant in compression. Weight$\times$Gradient~\citep{molchanov2017pruning,shrikumar2017learning} is the first-order term; FWSVD's diagonal empirical Fisher~\citep{fwsvd} approximates the curvature. LRP~\cite{bach2015pixel} arrives at the same first-order quantity from another foundation, interpreting the model's prediction as relevance to redistribute it within each layer under gradient-free propagation rules (Appendix~\ref{sec:appendix_influence_metrics}). \AIR{} accepts any element-wise score $\mathbf{I}\in\mathbb{R}^{m\times n}$ from this family. Empirically, the integration mechanism dominates the choice: principled signals yield identical perplexity through our update rule, while uninformative baselines fall behind (Table~\ref{tab:influence_metric_ablation}). We default to AttnLRP~\cite{attnlrp} with $\epsilon=10^{-6}$~\citep{hatefi2024pruningexplainingrevisitedoptimizing} for its gradient-free rules and noise suppression of near-zero pre-activations: relevance is initialized at $\text{rel}=f(\mathbf{x})$ and propagated to per-layer $\mathbf{R}\in\mathbb{R}^{m\times n}$. \AIR{} sets $\tilde{\mathbf{I}}=\mathbf{R}$, accumulates $\mathbf{I}=\sum_{d\in\mathcal{D}_{\text{cal}}}|\tilde{\mathbf{I}}^{(d)}|$, and normalizes to unit mean per layer.

\subsection{AIR - Problem}\label{subsec:air_problem}
SVD-LLM's \textit{weight-profiling} (Eq.~\ref{eq:whitening}) simplifies the activation-aware objective from $\|\mathbf{WX} - (\mathbf{U}_k \boldsymbol{\Sigma}_k \mathbf{V}_k^\top)\mathbf{X}\|_F^2$ to
\begin{equation}\label{eq:act_loss}
\mathcal{L}_{\text{act}} = \|\mathbf{W}' - \mathbf{U}'_k \boldsymbol{\Sigma}'_k \mathbf{V}'^\top_k\|_F^2,
\end{equation}
minimized optimally by SVD-LLM(W)~\cite{svd_llm} (vanilla-SVD on the \textit{profiled} weight $\mathbf{W}'$, by Eckart--Young~\cite{eckart1936approximation}). \AIR{} expands $\mathcal{L}_{\text{act}}$ with an element-wise influence penalty: each squared residual is scaled by $(1+\delta\cdot i_{ij})$, multiplicatively integrating influence scores $\mathbf{I}$ into the loss. The additive all-ones anchor ensures activation-aware approximation error is not discounted for low-influence weights (revisited empirically in Section~\ref{subsec:als_dynamics}, Figure~\ref{fig:delta}). With $\delta$ controlling the influence impact, \AIR{}'s central hybrid activation- and influence-aware objective reads
\begin{equation}\label{eq:hybrid_loss}
\boxed{\mathcal{L}_{\text{act,infl}} = \bigl\|\sqrt{\mathbf{1}+\delta\cdot\mathbf{I}}\,\odot\,\bigl(\mathbf{W}' - \mathbf{U}'_k\boldsymbol{\Sigma}'_k\mathbf{V}'^\top_k\bigr)\bigr\|_F^2},
\end{equation}
where $\mathbf{I}\in\mathbb{R}^{m\times n}$ is the element-wise influence matrix from the backward-pass analysis (\emph{not} the identity matrix), $\mathbf{1}\in\mathbb{R}^{m\times n}$ is the all-ones matrix, $\odot$ is the Hadamard product, and the square-root is element-wise; at $\delta=0$, $\mathcal{L}_{\text{act,infl}}$ collapses to $\mathcal{L}_{\text{act}}$. The influence-weighted form admits no closed-form SVD solution in general~\citep{srebro2003weighted}; we instead solve $\arg\min_{\mathbf{U}'_k,\boldsymbol{\Sigma}'_k,\mathbf{V}'_k}\mathcal{L}_{\text{act,infl}}$ via the rank-wise closed-form ALS updates derived in Section~\ref{sec:air_solution} (full element-wise penalty derivation in Appendix~\ref{subsec:als_derivation}).

\subsection{AIR - Solution}\label{sec:air_solution}

As initialization, \AIR{} adopts SVD-LLM(W)'s activation-aware optimum $\mathbf{U}'_k, \boldsymbol{\Sigma}'_k, \mathbf{V}'_k \leftarrow \text{SVD}(\mathbf{W}', k)$ (Section~\ref{subsec:air_problem}), then runs an Alternating Least Squares (ALS) sweep over the ranks: a single closed-form update to each rank component $(\sigma'_r, \mathbf{u}'_r, \mathbf{v}'_r)$, iterating $r=k-1$ down to $r=0$. This backward direction protects the leading components (small $r$) from excessive distortion: the first few ALS iterations impose the largest updates and shrink as the residual error converges (Table~\ref{tab:als_ablations}). For each rank $r$, we first update $\mathbf{v}'_r$ given $\mathbf{U}'_k$, $\boldsymbol{\Sigma}'_k$, and the residual $\mathbf{E}_r = \mathbf{W}' - \mathbf{W}'_k + \sigma'_r \mathbf{u}'_r \mathbf{v}'^\top_r$ (i.e., $\mathbf{W}'$ minus the current rank-$k$ approximation, with rank $r$'s contribution added back), through
\begin{equation}\label{eq:update_v}
\mathbf{v}'_r = \left( \frac{\mathbf{u}'^\top_r \bigl((\mathbf{1} + \delta \cdot \mathbf{I}) \odot \mathbf{E}_r\bigr)}{\sigma'_r \cdot (\mathbf{u}'^2_r)^\top (\mathbf{1} + \delta \cdot \mathbf{I})} \right)^\top,
\end{equation}
where the division is element-wise and $\mathbf{u}'^2_r$ denotes element-wise squaring. Then, given $\boldsymbol{\Sigma}'_k$ and the updated $\mathbf{V}'_k$, we obtain
\begin{equation}\label{eq:update_u}
\tilde{\mathbf{u}}'_r = \frac{\bigl((\mathbf{1} + \delta \cdot \mathbf{I}) \odot \mathbf{E}_r\bigr) \mathbf{v}'_r}{(\mathbf{1} + \delta \cdot \mathbf{I}) (\mathbf{v}'^2_r)},
\end{equation}
again element-wise, and extract $\sigma'_r = \|\tilde{\mathbf{u}}'_r\|_2$, $\mathbf{u}'_r = \tilde{\mathbf{u}}'_r / \sigma'_r$. Canceling $\mathbf{v}'_r$ between numerator and denominator is not possible: these matrix-vector products mix different components of $\mathbf{v}'_r$ across summations. 

From this initialization (already the global optimum of $\mathcal{L}_{\text{act}}$ by SVD-LLM(W)), every ALS step provably refines the perturbed objective:
\begin{proposition}[Monotone descent of the ALS sweep]\label{prop:monotone_main}
Starting from $\mathrm{SVD}(\mathbf{W}', k)$ and applying Eqs.~\ref{eq:update_v}--\ref{eq:update_u} for $r=k-1,\dots,0$, $\mathcal{L}_{\text{act,infl}}$ is non-increasing at every step (proof in Appendix~\ref{subsec:als_derivation}).
\end{proposition}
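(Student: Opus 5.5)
The plan is to reduce each ALS step to an exact minimization of a convex weighted least-squares problem in a single block of variables, holding everything else fixed. Monotone descent then follows immediately, because an exact block minimizer can never do worse than the current iterate, which is itself feasible for that block. Write $\mathbf{M}=\mathbf{1}+\delta\mathbf{I}$, which has strictly positive entries since $\mathbf{I}\ge 0$ and $\delta\ge 0$. Fix a rank $r$ and isolate its component:
\begin{equation*}
\mathcal{L}_{\text{act,infl}}=\sum_{i,j} m_{ij}\bigl(e^{(r)}_{ij}-\sigma'_r u'_{r,i} v'_{r,j}\bigr)^2 ,
\end{equation*}
where $\mathbf{E}_r$ is exactly the residual defined before Eq.~\ref{eq:update_v}. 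This identity is just bookkeeping: $\mathbf{W}'-\mathbf{W}'_k=\mathbf{E}_r-\sigma'_r\mathbf{u}'_r\mathbf{v}'^\top_r$, and the other ranks enter only through $\mathbf{E}_r$, which stays fixed during the updates of rank $r$.

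\textbf{Step 1: the $\mathbf{v}'_r$ update.} With $\sigma'_r$ and $\mathbf{u}'_r$ fixed, the loss separates over columns $j$. Each column contributes a scalar quadratic $\sum_i m_{ij}(e_{ij}-\sigma'_r u'_{r,i}x)^2$ in $x=v'_{r,j}$. Setting its derivative to zero gives
\begin{equation*}
x=\frac{\sum_i m_{ij}u'_{r,i}e_{ij}}{\sigma'_r\sum_i m_{ij}u'^2_{r,i}},
\end{equation*}
which is Eq.~\ref{eq:update_v} read entrywise. The quadratic is convex, with leading coefficient $\sigma'^2_r\sum_i m_{ij}u'^2_{r,i}>0$ because $\mathbf{u}'_r$ is a unit vector and $\sigma'_r>0$. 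So this stationary point is the global minimizer over that coordinate, and the old value $v'_{r,j}$ is a competitor. Summing over $j$, the loss does not increase.

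\textbf{Step 2: the $\mathbf{u}'_r$ and $\sigma'_r$ update.} Reparametrize with $\mathbf{a}=\sigma'_r\mathbf{u}'_r$. The loss now separates over rows $i$ into $\sum_j m_{ij}(e_{ij}-a_i v'_{r,j})^2$. Its exact minimizer is $\tilde{u}'_{r,i}$ from Eq.~\ref{eq:update_u}, which is well defined as long as $\mathbf{v}'_r\neq\mathbf{0}$, since then the denominator $\sum_j m_{ij}v'^2_{r,j}>0$. The previous product $\sigma'_r\mathbf{u}'_r$ is feasible for this row-separable problem, so the loss again does not increase. Extracting $\sigma'_r=\|\tilde{\mathbf{u}}'_r\|_2$ and $\mathbf{u}'_r=\tilde{\mathbf{u}}'_r/\sigma'_r$ leaves the product $\sigma'_r\mathbf{u}'_r\mathbf{v}'^\top_r$ unchanged, so the loss value is exactly preserved. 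After both steps, the new rank-$r$ term is used to form $\mathbf{E}_{r-1}$ for the next rank, and the same argument applies with the updated approximation. Induction over $r=k-1,\dots,0$ then gives a non-increasing sequence of loss values.

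\textbf{Main obstacle: degeneracies.} The expected difficulty is not the convexity but keeping the updates well defined, because Eq.~\ref{eq:update_v} divides by $\sigma'_r$. At initialization every $\sigma'_r>0$ for $r<\operatorname{rank}(\mathbf{W}')$. The risk is that a $\mathbf{v}$-update produces $\mathbf{v}'_r=\mathbf{0}$, which happens exactly when $\mathbf{u}'^\top_r(\mathbf{M}\odot\mathbf{E}_r)=\mathbf{0}$. Then Eq.~\ref{eq:update_u} has zero denominators, $\tilde{\mathbf{u}}'_r$ is undefined, and the normalization would divide by zero. I would handle this with a convention: if a denominator vanishes, set that block to zero, or skip the step. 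Any minimizer is acceptable, and keeping the old value already achieves the minimum in that coordinate, so descent is unaffected. I would also note explicitly that the initialization matters only for well-definedness and not for the descent argument itself.
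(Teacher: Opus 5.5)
Your proposal is correct and follows essentially the same argument as the paper. Each sub-step is the exact minimizer of a convex weighted least-squares quadratic with everything else held fixed: column-separable in $\mathbf{v}'_r$, and row-separable in $\tilde{\mathbf{u}}'_r=\sigma'_r\mathbf{u}'_r$. The final normalization is a reparameterization that leaves $\sigma'_r\mathbf{u}'_r\mathbf{v}'^\top_r$, and hence the loss, unchanged. Your handling of the degenerate cases is more careful than the paper's: it covers $\sigma'_r=0$ when $k>\operatorname{rank}(\mathbf{W}')$, and $\mathbf{v}'_r=\mathbf{0}$, which makes Eq.~\ref{eq:update_u} ill-defined, whereas the paper only asserts $\sigma'_r\neq 0$ ``by construction'' and tacitly assumes $\mathbf{v}'_r\neq\mathbf{0}$.
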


\paragraph{Applying the \AIR{} solution to LLM compression.} We project the decomposition back into the native space and absorb the singular values: $\mathbf{U}_k \leftarrow \mathbf{U}'_k \sqrt{\boldsymbol{\Sigma}'_k}$, $\mathbf{V}^\top_k \leftarrow \sqrt{\boldsymbol{\Sigma}'_k} (\mathbf{V}'^{\top}_k \mathbf{S}^{-1})$. Compression is realized through $\mathbf{W} \leftarrow \mathbf{U}_k \mathbf{V}_k^\top$; at inference, $\mathbf{y} = \mathbf{U}_k(\mathbf{V}_k^\top\mathbf{x})$ reduces MACs from $mn$ to $k(m+n)$ (full FLOP analysis in Appendix~\ref{subsec:implementation_efficiency}; pseudocode in Algorithm~\ref{alg:RR}).

\section{Experiments}\label{sec:experiments}

This section evaluates the performance of \AIR{} against and in combination with other compression methods. Some SVD-based methods inherently require enhancements such as dynamic rank allocation or gradient-based fine-tuning (Sec.~\ref{sec:related}). To facilitate fair comparisons, each experiment groups methods with similar requirements: Table~\ref{tab:perplexity_comparison} considers methods without enhancements; Table~\ref{tab:perplexity_comparison_lora} filters for methods using post-compression end-to-end fine-tuning through LoRA. Each method receives identical calibration data and is evaluated with the same protocol. \textbf{Default experimental configuration.} Unless otherwise specified, our experiments employ: model \texttt{jeffwan/llama-7b-hf} at 16-bit half-precision, evaluation on \textsc{WikiText-2}, calibration with 256 \textsc{WikiText-2} samples of 2048 tokens each, AttnLRP with the Epsilon rule at $\epsilon=10^{-6}$, influence weighting strength $\delta=2.0$, parameter rate $60\%$, token generation at batch size $64$ with sequence length $512$. Best results in bold, relative gains over second-best in green.
 
\begin{table}[t!]
\centering
\caption{SVD-based compression of LLaMA 7B without enhancements: WikiText-2 / C4 perplexity ($\downarrow$) and average reasoning accuracy ($\uparrow$, mean over OpenBookQA, ARC-E, WinoGrande, HellaSwag, PIQA, MathQA). Full table including JS-divergence and per-task reasoning in Appendix~\ref{subsec:full_perplexity_comparison} (Table~\ref{tab:perplexity_comparison_full}).}
\label{tab:perplexity_comparison}
\resizebox{\linewidth}{!}{%
\scriptsize
\setlength{\tabcolsep}{3.5pt}
\begin{tabular}{l|l|cc|c}
\toprule
    \textsc{Param.} & \textsc{Method} & \multicolumn{2}{c|}{\textsc{Perplexity} $\downarrow$} & \textsc{Reas.} $\uparrow$ \\
    \cmidrule(lr){3-4}
    \textsc{Rate} & & WikiText2 & C4 & Avg.\ \\
\midrule
    \textcolor{gray}{100\%} & \textcolor{gray}{Base Model} & \textcolor{gray}{5.68} & \textcolor{gray}{7.34} & \textcolor{gray}{57\%} \\
\midrule
    & Vanilla SVD & 19438 & 16115 & 32.8\% \\
    & FWSVD & 22026 & 32048 & 31.3\% \\
    80\% & ASVD & 116 & 105 & 39.5\% \\
    & SVD-LLM(W) & 7.87 & 16.65 & 48.8\% \\
    \cmidrule{2-5}
    & \texttt{AIR} & \textbf{7.51} (\textcolor{DarkGreen}{$\downarrow$4.6\%}) & \textbf{14.24} (\textcolor{DarkGreen}{$\downarrow$14.5\%}) & \textbf{49.9\%} (\textcolor{DarkGreen}{$\uparrow$2.3\%}) \\
\midrule
    & Vanilla SVD & 52839 & 46630 & 32.1\% \\
    & FWSVD & 81838 & 111860 & 31.1\% \\
    60\% & ASVD & 4915 & 8103 & 31.3\% \\
    & SVD-LLM(W) & 13.81 & 56.33 & 40.0\% \\
    \cmidrule{2-5}
    & \texttt{AIR} & \textbf{11.27} (\textcolor{DarkGreen}{$\downarrow$18.4\%}) & \textbf{35.81} (\textcolor{DarkGreen}{$\downarrow$36.4\%}) & \textbf{41.6\%} (\textcolor{DarkGreen}{$\uparrow$4.0\%}) \\
\midrule
    & Vanilla SVD & 105082 & 105082 & 31.8\% \\
    & FWSVD & 134928 & 126754 & 31.5\% \\
    40\% & ASVD & 143631 & 134928 & 31.4\% \\
    & SVD-LLM(W) & 63.83 & 345 & 33.3\% \\
    \cmidrule{2-5}
    & \texttt{AIR} & \textbf{42.52} (\textcolor{DarkGreen}{$\downarrow$33.4\%}) & \textbf{277} (\textcolor{DarkGreen}{$\downarrow$19.7\%}) & \textbf{33.6\%} (\textcolor{DarkGreen}{$\uparrow$0.9\%}) \\
\midrule
    & Vanilla SVD & 196320 & 208981 & \textbf{32.4\%} (\textcolor{DarkGreen}{$\uparrow$2.2\%}) \\
    & FWSVD & 268337 & 208981 & 31.2\% \\
    20\% & ASVD & 24959 & 23447 & 31.2\% \\
    & SVD-LLM(W) & 854 & 8626 & 31.3\% \\
    \cmidrule{2-5}
    & \texttt{AIR} & \textbf{472} (\textcolor{DarkGreen}{$\downarrow$44.7\%}) & \textbf{2550} (\textcolor{DarkGreen}{$\downarrow$70.4\%}) & 31.7\% \\
\bottomrule
\end{tabular}
}
\end{table}

\subsection{Performance-efficiency trade-off}

\paragraph{Methods without enhancements.} Our primary efficiency metric is the \emph{parameter count rate} $k(m+n)/(m\cdot n)$ for a rank-$k$ approximation of $\mathbf{W}\in\mathbb{R}^{m \times n}$, applied uniformly across layers (a reliable proxy for FLOP, memory, and runtime gains; Table~\ref{tab:compression_metrics_correlation}). Table~\ref{tab:perplexity_comparison} compares \AIR{} and SVD-based baselines~\cite{svd_llm,yuan2024asvdactivationawaresingularvalue} on WikiText-2~\cite{merity2016pointer}/C4~\cite{raffel2020exploring} perplexity and average common-sense reasoning accuracy~\cite{clark2018think, mihaylov2018can, zellers2019hellaswag, bisk2020piqa, sakaguchi2021winogrande, amini2019mathqa} (full per-task and JS-divergence~\cite{jsd_khanal2024evaluatingimpactcompressiontechniques} numbers, including 80\%/20\% retention rows, in Appendix Table~\ref{tab:perplexity_comparison_full}). \AIR{}'s closed-form ALS adds only $\sim$12 minutes of A100 compute on LLaMA-7B and remains complementary to LoRA (Table~\ref{tab:perplexity_comparison_lora}). \AIR{} consistently outperforms SVD-LLM(W), with the gap widening at higher compression, evidence that simultaneous activation- and influence-awareness preserves parameters that purely activation-aware methods cannot distinguish. FWSVD performs worse than vanilla SVD, reflecting the two issues \AIR{} addresses: weak (row-wise) signal integration combined with no activation-awareness. \textbf{Generalization across models.} Table~\ref{tab:perplexity_comparison_other_models} extends evaluation to Mistral 7B~\cite{jiang2023mistral7b}, Vicuna-7B~\cite{vicuna2023}, TinyLlama (1.1B)~\cite{zhang2024tinyllamaopensourcesmalllanguage}, LLaMA 2-7B~\cite{touvron2023llama2}, LLaMA 3-8B~\cite{llama3modelcard}, and LLaMA 30B~\cite{touvron2023llama}: \AIR{} achieves the lowest perplexity at 60\% rate on every model. \textbf{Structured-pruning comparison.} Table~\ref{tab:perplexity_comparison_structural_pruning} compares \AIR{} against SliceGPT~\cite{ashkboos2024slicegpt} and BlockPruner~\cite{zhong2025blockprunerfinegrainedpruninglarge} at matched memory footprints; \AIR{}'s advantage grows under tighter memory budgets.

\begin{table}[t!]
\centering
\caption{Perplexity on different models, evaluated at 60\% parameter rate.}
\label{tab:perplexity_comparison_other_models}
\resizebox{\linewidth}{!}{%
\scriptsize
\renewcommand{\arraystretch}{1.05}
\setlength{\tabcolsep}{3pt}
\begin{tabular}{l|c|cc|c}
\toprule
& \textcolor{gray}{Base Model} & ASVD & SVD-LLM(W) & \texttt{AIR} \\
\midrule
Mistral-7B       & \textcolor{gray}{5.24} & 18261 & 87.2  & \textbf{60} (\textcolor{DarkGreen}{$\downarrow$31.2\%}) \\
Vicuna-7B        & \textcolor{gray}{6.78} & 3827  & 18.58 & \textbf{16.65} (\textcolor{DarkGreen}{$\downarrow$10.36\%}) \\
TinyLLaMA (1.1B) & \textcolor{gray}{7.99} & 9182  & 39.33 & \textbf{29.68} (\textcolor{DarkGreen}{$\downarrow$24.53\%}) \\
LLaMA 2-7B       & \textcolor{gray}{5.49} & 3596  & 16.65 & \textbf{14.47} (\textcolor{DarkGreen}{$\downarrow$13.09\%}) \\
LLaMA 3-8B       & \textcolor{gray}{6.13} & 19438 & 230   & \textbf{67.95} (\textcolor{DarkGreen}{$\downarrow$70.46\%}) \\
LLaMA 30B        & \textcolor{gray}{4.11} & 16.91 & 7.74  & \textbf{7.16}  (\textcolor{DarkGreen}{$\downarrow$7.49\%}) \\
\bottomrule
\end{tabular}}
\end{table}

\begin{table}[t!]
\centering
\caption{\AIR{} vs.\ structured pruning, LLaMA 7B (13.5GB), perplexity ($\downarrow$).}
\label{tab:perplexity_comparison_structural_pruning}
\resizebox{\linewidth}{!}{%
\scriptsize
\renewcommand{\arraystretch}{1.05}
\setlength{\tabcolsep}{3pt}
\begin{tabular}{l|cccc}
\toprule
\textsc{Memory} & 10GB & 9GB & 8GB & 7GB \\
\midrule
BlockPruner & 9.88* & 12.21* & 18.94* & 21.68* \\
SliceGPT    & 9.49  & 12.18  & 17.18  & 26.12  \\
\midrule
\texttt{AIR}
& \textbf{8.24}  (\textcolor{DarkGreen}{$\downarrow$13.2\%})
& \textbf{9.64}  (\textcolor{DarkGreen}{$\downarrow$20.9\%})
& \textbf{12.37} (\textcolor{DarkGreen}{$\downarrow$28.0\%})
& \textbf{18.29} (\textcolor{DarkGreen}{$\downarrow$30.0\%}) \\
\bottomrule
\end{tabular}}\\[2pt]
{\scriptsize * Value taken from~\cite{svd_llm}.}
\end{table}

\begin{table}[t!]
\centering
\captionof{table}{LoRA fine-tuning enhancement, perplexity ($\downarrow$). Further end-to-end ablations (full retraining on the calibration set, with and without LoRA) are reported in Appendix~\ref{subsec:retraining_discussion}.}
\label{tab:perplexity_comparison_lora}
\resizebox{\linewidth}{!}{%
\scriptsize
\renewcommand{\arraystretch}{1.05}
\setlength{\tabcolsep}{3pt}
\begin{tabular}{l|cccc}
\toprule
\textsc{Param. rate} & 80\% & 60\% & 40\% & 20\% \\
\midrule
ACIP (LoRA inherent) & 8.64 & 12.57 & 25    & 3378 \\
SVD-LLM(W) + LoRA       & 7.31 & 9.41  & 16.82 & 114  \\
\midrule
\texttt{AIR + LoRA}
& \textbf{7.24}  (\textcolor{DarkGreen}{$\downarrow$1\%})
& \textbf{9.09}  (\textcolor{DarkGreen}{$\downarrow$3\%})
& \textbf{14.49} (\textcolor{DarkGreen}{$\downarrow$14\%})
& \textbf{46.86} (\textcolor{DarkGreen}{$\downarrow$59\%}) \\
\bottomrule
\end{tabular}}

\vspace{1.0em}

\centering
\captionof{table}{\AIR{} at 60\% parameter rate stacked with RTN/GPTQ quantization across 8/4/2-bit widths (LLaMA 7B, base 13.5\,GB); file-size rate ($\downarrow$) and WikiText-2 perplexity ($\downarrow$).}
\label{tab:perplexity_comparison_quantization}
\resizebox{\linewidth}{!}{%
\scriptsize
\renewcommand{\arraystretch}{1.05}
\setlength{\tabcolsep}{3pt}
\begin{tabular}{l|cc cc cc}
\toprule
\textsc{Bit Width}
& \multicolumn{2}{c}{8-bit}
& \multicolumn{2}{c}{4-bit}
& \multicolumn{2}{c}{2-bit} \\
\cmidrule(lr){2-3} \cmidrule(lr){4-5} \cmidrule(lr){6-7}
& Mem. & PPL & Mem. & PPL & Mem. & PPL \\
\midrule
\texttt{AIR(60\%) + RTN}  & 35.49\% & 11.44 & 20.63\% & 13.17 & 13.19\% & 81838 \\
\texttt{AIR(60\%) + GPTQ} & 35.47\% & 11.27 & 20.61\% & 11.99 & 13.17\% & 17154 \\
\bottomrule
\end{tabular}}

\vspace{1.0em}

\centering
{\scriptsize
\renewcommand{\arraystretch}{1.05}
\begin{tabular*}{\linewidth}{@{\extracolsep{\fill}}l|cc|ccc@{}}
\toprule
& \multicolumn{2}{c|}{\textsc{Pseudo}} & \multicolumn{3}{c}{\textsc{Backward, Principled}} \\
\cmidrule(lr){2-3} \cmidrule(lr){4-6}
\textsc{Signal} ($\mathbf{I}$) & \textsc{Ones} & \textsc{W} & \textsc{W$\times$Grad} & \textsc{Fisher} & \textsc{LRP-$\epsilon$} \\
\midrule
\texttt{AIR(60\%)} & 13.80 & 14.02 & 11.27 & 11.27 & \textbf{11.27} \\
\bottomrule
\end{tabular*}}
\captionof{table}{Perplexity at different signals (magnitudes only) populating influence matrix $\mathbf{I}$. W denotes weights.}
\label{tab:influence_metric_ablation}

\vspace{0.8em}

\centering
{\scriptsize
\renewcommand{\arraystretch}{1.05}
\begin{tabular*}{\linewidth}{@{\extracolsep{\fill}}l|cc|ccc@{}}
\toprule
& \multicolumn{2}{c|}{\textsc{Direction}} & \multicolumn{3}{c}{\textsc{Sweep count}} \\
\cmidrule(lr){2-3} \cmidrule(lr){4-6}
& \textsc{Backw.} & \textsc{Forw.} & 1 & 2 & 10 \\
\midrule
\texttt{AIR(60\%)} & \textbf{11.27} & 11.81 & \textbf{11.27} & 11.44 & 11.44 \\
\bottomrule
\end{tabular*}}
\captionof{table}{ALS ablations: perplexity across sweep directions and counts.}
\label{tab:als_ablations}
\end{table}

\begin{figure}[t!]
\centering
\includegraphics[width=\linewidth,height=0.18\textheight,keepaspectratio,trim=0 0 0 6, clip]{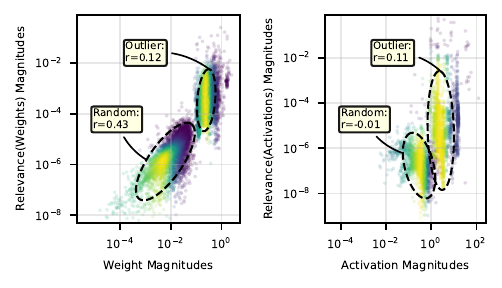}
\captionof{figure}{Weight (left) and activation (right) magnitudes vs.\ attributed relevance across all layers of LLaMA 7B; \textit{Random} samples and \textit{Outliers}.}
\label{fig:scatter_correlation}

\vspace{1.0em}

\centering
\includegraphics[width=\linewidth,height=0.20\textheight,keepaspectratio]{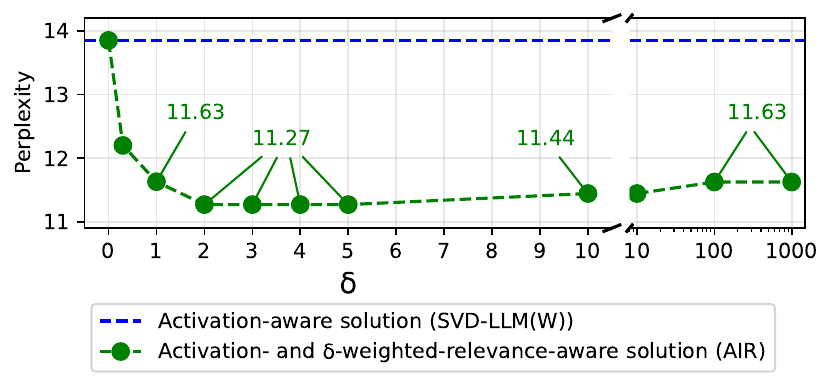}
\captionof{figure}{Perplexity vs.\ influence-weighting strength~$\delta$, yielding the optimal working point.}
\label{fig:delta}
\end{figure}

\paragraph{Methods with enhancements.} \textbf{End-to-end fine-tuning through LoRA.} Table~\ref{tab:perplexity_comparison_lora} evaluates LoRA fine-tuning~\cite{Hu2021} after compression. \AIR{}\,+\,LoRA and SVD-LLM(W)\,+\,LoRA use Alpaca~\cite{dubois2024alpacafarmsimulationframeworkmethods} with 50k samples (sequential U/V optimization following~\cite{svd_llm}); ACIP~\cite{genzel2025choosemodelsizecompression} bundles LoRA inherently with 1k fine-tuning steps~\cite{merantix2024acip_repo}. Although \AIR{} and end-to-end fine-tuning both exploit a backward signal, they remain complementary rather than redundant: \AIR{} alone is already competitive with end-to-end ACIP, and \AIR{}+LoRA outperforms every baseline at every rate (gap up to 59\% at 20\% retention). Appendix~\ref{subsec:retraining_discussion} extends this to full retraining of the compressed model on the calibration set; \AIR{}, LoRA, and full retraining each contribute independent gains. \textbf{Quantization.} Table~\ref{tab:perplexity_comparison_quantization} combines \AIR{} with RTN and GPTQ~\cite{Frantar2023GPTQ}. At 60\% rate with 4-bit GPTQ, \AIR{} retains 11.99 perplexity (vs.\ 11.27 unquantized) at 20.6\% memory, indicating the low-rank structures \AIR{} learns are amenable to lower-precision representation.

\subsection{ALS dynamics and influence-weighting}\label{subsec:als_dynamics}
The integration mechanism, not the signal choice, drives \AIR{}'s gains: ablating the influence source (Table~\ref{tab:influence_metric_ablation}), all three principled backward signals (LRP-$\epsilon$, Weight$\times$Gradient, diagonal Fisher) yield identical perplexity, while uninformative baselines (ones, weight magnitude) fall behind. The inset of Figure~\ref{fig:rr_update} visualizes the ALS dynamics in a single layer (1228 ranks at 60\% parameter rate): $\mathcal{L}_{\text{act,infl}}$ decreases monotonically from the activation-only initialization to the \AIR{} solution while $\mathcal{L}_{\text{act}}$ correspondingly grows, evidence that ALS re-allocates reconstruction error from high- to low-influence positions. The sweep completes in 1.5\,s per layer on an A100 ($\sim$12\,min model-wide, including profiling and influence accumulation); a single backward sweep ($r=k-1\!\to\!0$) is optimal because the larger early updates then land on subordinate rather than dominant ranks, and additional sweeps continue to lower $\mathcal{L}_{\text{act,infl}}$ but slightly degrade perplexity (Table~\ref{tab:als_ablations}), consistent with mild overfitting to the calibration-set proxy. \textbf{Influence-weighting strength.} Figure~\ref{fig:delta} shows perplexity vs.\ $\delta$ in Eq.~\ref{eq:hybrid_loss}: $\delta=0$ recovers SVD-LLM(W), the optimum sits at $\delta=2.0$ (our default), and perplexity rises mildly at large $\delta$ where the objective effectively degenerates to a purely influence-weighted reconstruction $(\mathbf{1}+\delta\mathbf{I})\to\delta\mathbf{I}$, empirically confirming the role of the all-ones anchor. \textbf{Influence vs.\ native-space magnitudes.} Figure~\ref{fig:scatter_correlation} shows correlations between weight (left) and activation (right) magnitudes and their attributed relevance for two subsets per layer of LLaMA 7B: (1) \textit{Random}, $1$~ppm of total parameters, and (2) \textit{Outliers}, the $1$~ppm highest-magnitude weights/activations. Random-sample weights show only modest correlation; the relationship essentially vanishes for outlier weights and for activations in either subset. Considering the high impact of outliers in LLMs~\cite{xiao2023smoothquant}, this disconnect indicates that neither weight nor activation magnitude reliably proxies influence, creating headroom for compression that preserves high-relevance, low-magnitude parameters that magnitude-based methods would discard (spatial view in Appendix~\ref{sec:appendix_spatial_patterns}).

\begin{table}[t!]
\centering
{\scriptsize
\renewcommand{\arraystretch}{1.05}
\begin{tabular*}{\linewidth}{@{\extracolsep{\fill}}c!{\vrule}l!{\vrule}cccc@{}}
\toprule
\textsc{Type} & \textsc{Metric} & \multicolumn{4}{c}{\textsc{compr.\,/\,base}} \\
\midrule
\multirow{3}{*}{\textsc{Static}}
    & \textsc{Param.}   & 80\% & 60\% & 40\% & 20\% \\
    & \textsc{FLOPs}    & 80\% & 61\% & 41\% & 21\% \\
    & \textsc{Mem.}     & 81\% & 62\% & 42\% & 23\% \\
\midrule
\multirow{2}{*}{\textsc{Runtime}}
    & \textsc{Peak GPU} & 75\% & 64\% & 52\% & 42\% \\
    & \textsc{Latency}  & 53\% & 53\% & 52\% & 53\% \\
\bottomrule
\end{tabular*}}
\caption{Parameter rate is a reliable proxy for holistic SVD-based efficiency.}
\label{tab:compression_metrics_correlation}
\end{table}

\begin{figure}[t!]
\centering
\includegraphics[width=\linewidth,height=0.21\textheight,keepaspectratio]{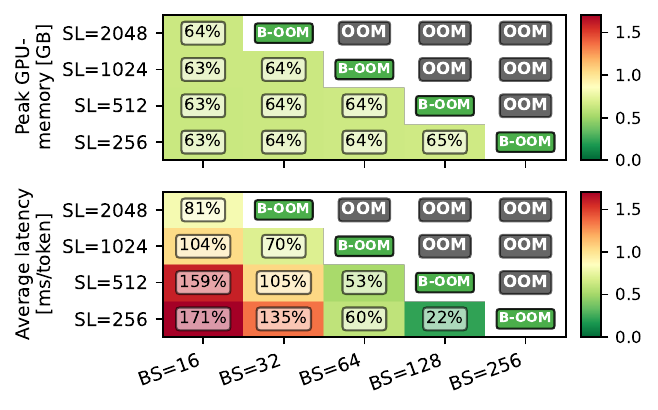}
\caption{Runtime efficiency: Peak GPU-memory and per-token latency rates across batch sizes and sequence lengths; LLaMA 7B at 60\% parameter rate. ``OOM'' marks cells exceeding the 40\,GB A100 budget; ``B-OOM'' marks cells where only the base model OOMs while \AIR{}(60\%) still fits. Absolute values in Appendix Fig.~\ref{fig:peak_mem_throughput_rate_full}.}
\label{fig:peak_mem_throughput_rate}
\end{figure}

\subsection{Efficiency}

\textbf{Token-generation.} Realizing the efficiency gains promised by SVD-based compression requires a forward-pass implementation that matches the parameter reduction at every level. Static gains do not arise automatically: memory reductions track the parameter count only when truncated ranks are physically pruned rather than zero-masked, and FLOP reductions require sequential propagation $\mathbf{y}=\mathbf{U}_k(\mathbf{V}_k^\top\mathbf{x})$, avoiding the preliminary full-weight reconstruction $\mathbf{W}=\mathbf{U}_k\mathbf{V}_k^\top$. Runtime gains during autoregressive decoding demand additional care: KV-cache low-rank caching, pre-allocated buffers, and RoPE pre-application. \AIR{} integrates all of these into its inference path; at 60\% parameter rate on LLaMA-7B, this yields a near-uniform peak GPU-memory ratio of ${\sim}64\%$ across the batch-size$\times$sequence-length grid (Figure~\ref{fig:peak_mem_throughput_rate}). Per-token latency shows two regimes: under high bandwidth pressure (large BS and/or long SL, near the OOM frontier) \AIR{} reduces latency the most, down to $22\%$ at BS=128, SL=256; at small batch sizes and short sequences the two-matmul overhead can exceed the base model's latency (up to $171\%$ at BS=16, SL=256). At our default operating point (BS=64, SL=512), \AIR{} reaches $53\%$ of base latency, matching the near-constant latency rate at this fixed reference (Table~\ref{tab:compression_metrics_correlation}). The additional payoff of more aggressive compression is the matched peak-GPU reduction, which shifts the OOM frontier outward and unlocks the near-OOM regime where the largest latency gains arise; e.g., \AIR{}(60\%) fits BS=64, SL=1024, where the base model OOMs on a 40\,GB A100. Prior SVD-based methods often achieve the static numbers but overclaim runtime gains; Appendix~\ref{subsec:implementation_efficiency} details our optimizations and audits these claims, including a corrected accounting that turns a reported $2.65\times$ SVD-LLM speedup into a $27\%$ slowdown once early-EOS termination is accounted for. \textbf{Data efficiency.} \AIR{} matches SVD-LLM(W)'s 256-sample quality with $\sim$90\% less calibration data, and the gap persists as more samples are added (Figure~\ref{fig:26_samples}); if gains came purely from better activation estimation, more samples would close the gap.

\begin{figure}[!t]
\centering
\includegraphics[width=\linewidth,height=0.18\textheight,keepaspectratio]{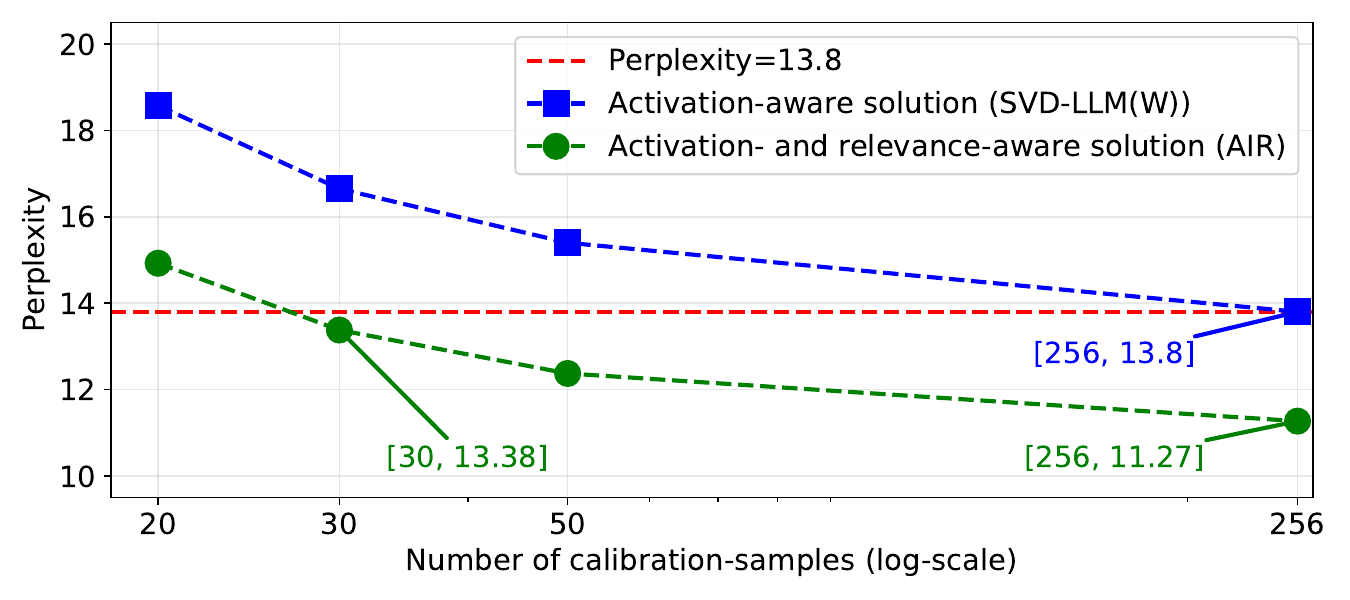}
\caption{Data-efficiency: WikiText-2 perplexity vs.\ number of calibration samples for \AIR{} and SVD-LLM(W).}
\label{fig:26_samples}
\end{figure}

\section{Conclusion}
\label{sec:conclusion}

\AIR{} closes the gap between activation-aware SVD compression and end-to-end retraining in a single closed-form, layer-local sweep. On LLaMA-7B it lowers perplexity over SVD-LLM(W) at all parameter rates (widening under aggressive compression), matches it with $\sim90\%$ less calibration data, and converts parameter savings into runtime gains ($\sim64\%$ peak memory, $53\%$ latency at 60\% retention on a 40\,GB A100); gains carry over to Mistral, Vicuna, LLaMA-2/3/30B and to structured-pruning baselines. Two takeaways. (1) \emph{Layer-local and end-to-end optimization are complementary}: LoRA, full retraining (Appendix~\ref{subsec:retraining_discussion}), and GPTQ quantization each compose without erasing \AIR{}'s advantage. (2) \emph{The integration mechanism dominates the choice of signal}: principled backward signals yield equivalent perplexity through our update rule. \textbf{Limitations.} Scaling to 70B+ and encoder-decoder/MoE architectures remains open; \AIR{}'s uniform per-layer rate leaves dynamic rank allocation, token-level adaptation, and budget-conditional inference as natural extensions. \textbf{Code:} \url{https://github.com/NicodeHarder/air}.

\clearpage

\balance
\bibliographystyle{icml2026}
\bibliography{99_references}

\onecolumn
\appendix

\section{Appendix}

\paragraph{Organization.} Appendix~\ref{sec:appendix_influence_metrics} develops the functional-awareness framework in full, including the Taylor-expansion hierarchy that organizes gradient-, curvature-, and Fisher-based influence metrics, and the mathematical equivalence between LRP-0 and Weight$\times$Gradient that supports the signal-agnostic claim of Sec.~\ref{sec:method}. Appendix~\ref{subsec:als_derivation} derives the closed-form ALS updates and proves Propositions~\ref{prop:init}--\ref{prop:monotone}. Appendix~\ref{subsec:als_dynamics_ext} provides the extended ablations and relevance-property analyses that the condensed Sec.~\ref{subsec:als_dynamics} summarizes (influence-signal ablation, ALS sweep direction/count, scatter and spatial heatmaps). Appendix~\ref{subsec:implementation_efficiency} presents the systems-level analysis of SVD-based inference, including an audit of throughput and KV-cache claims in prior work and the forward-pass optimizations that realize end-to-end gains. Appendix~\ref{subsec:retraining_discussion} reports a further end-to-end ablation that stacks \AIR{} with full retraining on the calibration set, in addition to and in combination with LoRA fine-tuning, providing the most demanding test of the additivity reading discussed in Sec.~\ref{sec:experiments}. Appendix~\ref{subsec:sentence_completion} provides the qualitative compression-failure analysis referenced in the main text, and Appendix~\ref{sec:appendix_spatial_patterns} visualizes the spatial structure of influence within transformer weight matrices.

\subsection{Functional Awareness in Compression}
\label{sec:appendix_influence_metrics}

Recall our general SVD-based compression through $\hat{\mathbf{W}}_k = \mathbf{U}_k\boldsymbol{\Sigma}_k\mathbf{V}_k^\top$, the rank-$k$ approximation of $\mathbf{W}$. Vanilla SVD minimizes weight reconstruction error $\|\mathbf{W} - \hat{\mathbf{W}}_k\|_F$, treating all weight elements as equally important regardless of their functional role in the network.
Achieving \emph{functional awareness}, ensuring that compression preserves the model's input--output behavior, requires incorporating calibration data $\mathcal{D}_\text{cal}$ to distinguish functionally critical from redundant parameters.
Existing approaches can be categorized by whether they derive this awareness from the \emph{forward pass} (activation-based, local) or the \emph{backward pass} (loss- or output-based, global).

Forward-pass methods operate layer-locally: they are computationally cheap and can yield provably optimal solutions to their respective local objectives, but remain unaware of whether the preserved activations actually contribute to the final prediction.
Backward-pass methods address this limitation by considering the global prediction loss or model output, but since directly minimizing the global objective amounts to costly retraining, they rely on \emph{influence metrics} as tractable proxies.
\AIR{} combines both: activation-awareness from the forward pass with influence-awareness from the backward pass.

Throughout, $\mathbf{W} \in \mathbb{R}^{m \times n}$ denotes a layer weight matrix, $x_i$ denotes the hidden state (activation) of input neuron $i$, pre-activations are $z_{ij} = x_i w_{ij}$ with $z_j = \sum_i z_{ij}$, and $f(\mathbf{x})$ denotes the model output.
We omit layer indexing for notational simplicity.

\subsubsection*{Forward-Pass Methods: Local Activation-Awareness}

For brevity in this overview, we use the equivalent scalar Frobenius form for $\mathcal{L}_{\text{act}}$; see Section~\ref{sec:method} (Eq.~\ref{eq:act_loss}) for the element-wise matrix form used throughout the \AIR{} derivation, with the two related by $\sum_{ij}[\mathcal{L}_{\text{act}}]_{ij} = \|\cdot\|_F^2$.

Forward-pass methods aim to preserve the integrity of the local layer output $\mathbf{Y} = \mathbf{W}\mathbf{X}$ under compression, minimizing
$\mathcal{L}_\text{act} = \|\mathbf{W}\mathbf{X} - \hat{\mathbf{W}}_k \mathbf{X}\|_F^2$.

\paragraph{ASVD~\citep{yuan2024asvdactivationawaresingularvalue}.}
ASVD introduces a diagonal scaling matrix $\mathbf{S}_0 = \mathrm{diag}(s_1, \ldots, s_n)$, where each $s_j$ reflects the significance of input channel $j$, typically set as $s_j = (\overline{|x_j|})^\alpha$ with $\overline{|x_j|}$ denoting the mean absolute activation of channel $j$ and $\alpha$ as a tunable exponent.
ASVD decomposes the scaled weight $\mathbf{W}\mathbf{S}_0$ via SVD, then absorbs $\mathbf{S}_0^{-1}$ into the reconstruction: $\mathbf{Y} \approx (\mathbf{U}_k \boldsymbol{\Sigma}_k \mathbf{V}_k^\top) \mathbf{S}_0^{-1} \mathbf{X}$.
This heuristically reweights the SVD to account for activation magnitudes across channels, but does not provably minimize $\mathcal{L}_\text{act}$; the scaling is approximate and the choice of $\alpha$ requires tuning.

\paragraph{SVD-LLM~\citep{svd_llm}.}
SVD-LLM applies a whitening transformation $\mathbf{W}' = \mathbf{W}\mathbf{S}$ with $\mathbf{S} = \mathrm{chol}\!\left(\sum_{\mathcal{D}_\text{cal}}\mathbf{X}\mathbf{X}^\top\right)$, absorbing the full second-order activation structure into the weight.
Since SVD of $\mathbf{W}'$ yields the optimal rank-$k$ approximation in Frobenius norm (Eckart--Young), and $\mathcal{L}_\text{act} = \|\mathbf{W}' - \hat{\mathbf{W}}'_k\|_F^2$ after profiling, this provides the \emph{provably optimal} activation-aware solution, a key advantage over ASVD's heuristic scaling.

Both methods are efficient (requiring only forward passes over $\mathcal{D}_\text{cal}$) but share a key limitation: they optimize for local layer-output fidelity without awareness of whether the preserved activations actually contribute to the model's final prediction.

\subsubsection*{Backward-Pass Methods: Global Prediction-Awareness
via Influence Metrics}

To incorporate global prediction-awareness, backward-pass methods consider the task loss of the compressed model.
Denoting the full set of model parameters by
$\boldsymbol{\theta}$, the task loss reads:
\begin{equation}
    \mathcal{L}(\boldsymbol{\theta}) =
    \frac{1}{|\mathcal{D}_\text{cal}|}
    \sum_{(\mathbf{x},y) \in \mathcal{D}_\text{cal}}
    \ell\!\left(y,\; f(\boldsymbol{\theta}, \mathbf{x})\right),
    \label{eq:task_loss}
\end{equation}
where $\ell$ denotes the per-sample loss (e.g., cross-entropy).
The ideal compression minimizes
$\mathcal{L}(\hat{\boldsymbol{\theta}}_k)$, the task loss evaluated
at the compressed parameters $\hat{\boldsymbol{\theta}}_k$.
Directly minimizing this objective amounts to training, which
can be prohibitively expensive for LLMs.
Instead, backward-pass methods propagate information from the model
output to yield \emph{influence scores} for weight components.
The compression task then decomposes into two steps:
(1)~defining an effective influence score $i_{ij}$ for each weight
$w_{ij}$, which quantifies the perturbation effect of modifying it
during compression; and
(2)~integrating this influence to guide compression, e.g., by
finding compressed weights $\hat{w}_{ij}$ that minimize the
influence-weighted surrogate
$\sum_{i,j} i_{ij}(w_{ij} - \hat{w}_{ij})^2$.

\paragraph{Defining Influence via Taylor Expansion.}
The influence metrics discussed below can be understood as
successive approximations to the task loss at the compressed weights
$\mathcal{L}(\hat{\boldsymbol{\theta}}_k)$: the derivatives are
computed at the original parameters $\boldsymbol{\theta}$, while
the expansion is evaluated at $\hat{\boldsymbol{\theta}}_k$, so
one simultaneously differentiates the task loss \emph{and}
approximates the compression objective.
The Taylor expansion is used here not to directly
optimize $\hat{\boldsymbol{\theta}}_k$ (which would require
inverting the Hessian or iterative descent), but to \emph{probe the
loss landscape} around $\boldsymbol{\theta}$: the resulting
per-element scores characterize how sensitive $\mathcal{L}$ is to
each weight, and these scores then guide a tractable surrogate
optimization (see ``Integrating Influence'' below).
The expansion is simplified by considering one layer at a time
(ignoring cross-layer dependencies), so that compression of
$\mathbf{W}$ to $\hat{\mathbf{W}}_k$ is the only perturbation.
The Taylor expansion of the loss around $\mathbf{W}$, evaluated at $\hat{\mathbf{W}}_k$, then gives:
\begin{equation}
\begin{split}
    \mathcal{L}(\hat{\mathbf{W}}_k)
    &\approx \underbrace{\mathcal{L}(\mathbf{W})}_{\text{0th order}}
    + \underbrace{\nabla_\mathbf{W}\mathcal{L}(\mathbf{W})^\top
      (\hat{\mathbf{W}}_k - \mathbf{W})}_{\text{1st order}} \\
    &\quad + \underbrace{\tfrac{1}{2}\,
      (\hat{\mathbf{W}}_k - \mathbf{W})^\top
      \mathbf{H}_\mathbf{W}\,
      (\hat{\mathbf{W}}_k - \mathbf{W})}_{\text{2nd order}}
    + \;\cdots\;,
\end{split}
\label{eq:taylor_general}
\end{equation}
where $\mathbf{H}_\mathbf{W} =
\nabla^2_\mathbf{W}\mathcal{L}(\mathbf{W})
\in \mathbb{R}^{p \times p}$ is the Hessian restricted to the
$p = mn$ parameters of this layer.
Since the 0th-order term $\mathcal{L}(\mathbf{W})$ is a constant
independent of the compression, it is irrelevant for minimization,
and influence metrics are derived from the remaining terms.

\paragraph{First-Order Influence (gradXweight).}
Retaining only the first-order term and restricting to a
single-element perturbation ($\hat{w}_{ij} = 0$, all others
unchanged, so $\hat{w}_{ij} - w_{ij} = -w_{ij}$) gives
the estimated loss change
$-w_{ij} \cdot \partial\mathcal{L}/\partial w_{ij}$.
This first-order term captures the \emph{slope} of the loss
landscape: it measures how steeply $\mathcal{L}$ changes when
$w_{ij}$ is perturbed. The signed influence reads:
\begin{equation}
    i_{ij}^{\text{gradXweight}} = -w_{ij} \cdot
    \frac{\partial \mathcal{L}}{\partial w_{ij}}
    = -z_{ij} \cdot
    \frac{\partial \mathcal{L}}{\partial z_j},
    \label{eq:imp_taylor}
\end{equation}
since $\partial\mathcal{L}/\partial w_{ij} =
(\partial\mathcal{L}/\partial z_j) \cdot x_i$.
This is identical to
``Weight\,$\times$\,Gradient''~\citep{molchanov2017pruning,shrikumar2017learning};
the two names reflect different motivations but yield the same formula.
The first-order approach ignores curvature and all higher-order
terms, yet this approximation has proved effective
empirically and receives further legitimacy through its mathematical
equivalence to LRP-0-based influence (see below), an
independent framework that arrives at the same quantity.
First-order influence also remains the only computationally feasible
option among the Taylor-derived metrics; although the
gradient is computed at the \emph{original} weights $\mathbf{W}$, it
yields scores that guide compression toward $\hat{\mathbf{W}}_k$.
The reformulation in terms of pre-activations $z_{ij}$ facilitates
comparison with LRP below.

\paragraph{Second-Order Influence (Hessian and Approximations).}
Retaining only the second-order term captures the \emph{curvature}
of the loss landscape: it measures how the slope itself changes
around $w_{ij}$, so that even a weight with small gradient can
be important if the loss curves sharply in its vicinity.
Again restricting to a single element:
\begin{equation}
    i_{ij}^{\text{Hessian}} = \tfrac{1}{2}\,w_{ij}^2 \cdot
    \frac{\partial^2\mathcal{L}}{\partial w_{ij}^2},
    \label{eq:imp_hessian}
\end{equation}
where $\partial^2\mathcal{L}/\partial w_{ij}^2$ is a diagonal
element of $\mathbf{H}_\mathbf{W}$.
Note that $(-w_{ij})^2 = w_{ij}^2$, so this term is always
non-negative (assuming positive semi-definite curvature) and
requires no absolute value. Computing $\mathbf{H}_\mathbf{W}$ requires $O(p^2)$ storage, infeasible for LLMs, motivating two lines of
approximation:

\textbf{(a) Direct structural approximation.}
Optimal Brain Damage~\citep{lecun1989obd} retains only the diagonal
of $\mathbf{H}_\mathbf{W}$; Optimal Brain
Surgeon~\citep{hassibi1992obs} additionally considers off-diagonal
elements via inverse-Hessian row computations.

\textbf{(b) Fisher information as Hessian proxy.}
FWSVD~\citep{fwsvd} replaces $\mathbf{H}_\mathbf{W}$ with the
empirical Fisher
$\hat{\mathbf{F}} = \frac{1}{|\mathcal{D}|}\sum_d
(\nabla_w\mathcal{L}_d)(\nabla_w\mathcal{L}_d)^\top$,
reducing the second derivative to a sum of squared first-order
gradients. This substitution captures second-order curvature only
under restrictive conditions on the data and parameters that are
essentially never met in practice~\citep{martens2020new,kunstner2019limitations}.
FWSVD further simplifies $\mathbf{F}$ to a diagonal, yielding
per-element influence
$i_{ij}^{\text{Fisher}} = \frac{1}{|\mathcal{D}|}\sum_d
(\partial\mathcal{L}_d / \partial w_{ij})^2$.
GFWSVD~\citep{gfwsvd} recovers some correlation
structure via Kronecker factorization.

\paragraph{Note on terminology: ``Hessian'' in the LLM compression literature.}
The term ``Hessian'' appears in two distinct senses in this literature, and we adopt only one of them throughout this section. SparseGPT, Wanda, OBC, SlimGPT, and SVD-LLM rely on the \emph{layer-wise reconstruction Hessian} $\mathbf{X}^\top\mathbf{X}$, i.e.\ the Hessian of the local objective $\|\mathbf{W}\mathbf{X} - \hat{\mathbf{W}}\mathbf{X}\|_F^2$; in our taxonomy this is an activation-driven local quantity, and ASVD and SVD-LLM accordingly describe the same object as ``activation-aware'' rather than as a Hessian (forward-pass column of Table~\ref{tab:influence_summary}). The Taylor-derived ``Hessian'' of this section is the \emph{loss Hessian} $\partial^2\mathcal{L}/\partial\mathbf{w}^2$ targeted by OBD, a distinct curvature object. To our knowledge, the diagonal of the loss Hessian has not been used as an element-wise saliency for LLM compression: diagonal Fisher provides a cheaper proxy, and exact diagonal estimation via Hutchinson / Hessian-vector-product estimators (cf.\ AdaHessian, Sophia) is substantially more expensive at LLM scale.

\paragraph{Unified view and approximation chain.}
First-order Taylor, Hessian, and Fisher all characterize the
\emph{same} global loss $\mathcal{L}$
(Eq.~\ref{eq:task_loss}) at increasing derivative
orders (slope, then curvature) with the Fisher being the
cheapest second-order proxy (approximating curvature
via squared gradients).
Influence-based compression in general is built on a chain of
approximations: (i)~considering only one layer at a time, ignoring
cross-layer dependencies; (ii)~retaining only a single term of an
infinite Taylor series; and (iii)~for second-order methods,
restricting the Hessian to its diagonal or a low-rank surrogate.
These are simplifications, yet in practice they provide
effective guidance for compression.
The first-order Taylor influence also finds independent
validation through Layer-wise Relevance Propagation: as we
show below, it is mathematically equivalent to LRP-0.
Despite this equivalence, LRP is derived from a distinct
foundation, not based on loss gradients but on relevance initialized
at the model output and propagated backward under layer-wise
conservation.
This offers a gradient-free perspective on influence,
which has also been shown empirically effective for
pruning~\citep{Yeom_2021,hatefi2024pruningexplainingrevisitedoptimizing}.

\paragraph{LRP-Based Relevance.}
LRP~\citep{bach2015pixel} assigns influence through a different
mechanism: rather than expanding $\mathcal{L}$, it
decomposes a scalar model output $f_c(\mathbf{x})$, the logit of
the predicted token, via relevance conservation.
For the $\epsilon$-rule at weight $w_{ij}$, with pre-activations
$z_{ij} = x_i w_{ij}$ and $z_j = \sum_i z_{ij}$:
\begin{equation}
    r_{i \leftarrow j}^{\text{LRP-}\epsilon} =
    \frac{z_{ij}}{z_j + \epsilon \cdot \mathrm{sign}(z_j)}
    \cdot R_j,
    \label{eq:lrpe_weight}
\end{equation}
where $R_j$ is the relevance received from the layer above.
This shares the forward factor $z_{ij}$ with first-order Taylor but
differs in the backward signal: Taylor propagates
$\partial\mathcal{L}/\partial z_j$ (loss gradient), while LRP
propagates $R_j / z_j$ (relevance ratio).

For deep ReLU networks with the basic LRP-0 rule ($\epsilon = 0$),
piecewise linearity causes the redistribution ratios $z_{ij}/z_j$ to
coincide with the local linear
coefficients~\citep{shrikumar2017learning,montavon2019layer}.
Chaining from the output to any weight $w_{ij}$ yields:
\begin{equation}
    r_{i \leftarrow j}^{\text{LRP-0}}
    = \frac{z_{ij}}{z_j} \cdot R_j
    = z_{ij} \cdot \frac{\partial f_c}{\partial z_j}
    = w_{ij} \cdot \frac{\partial f_c}{\partial w_{ij}},
    \label{eq:lrp0_taylor_equiv}
\end{equation}
where the magnitude is the same as from Weight\,$\times$\,Gradient when applied to some logit $f_c$ (instead of $\mathcal{L}$). This identity holds for any scalar function propagated from the output; initializing LRP from $\mathcal{L}$ instead of $f_c$ would recover exactly the first-order Taylor term (Eq.~\ref{eq:imp_taylor}), up to sign. The practical difference, $f_c$ vs.\ $\mathcal{L}$, is thus one of initialization choice: since cross-entropy mixes all logits through the softmax, the two signals differ in general, but align closely when the predicted token dominates the output distribution.

The $\epsilon$-rule introduces a separate deviation: the stabilizer
prevents the cancellation of $z_j$ in
Eq.~\eqref{eq:lrp0_taylor_equiv}, producing influence estimates
distinct from any gradient-based quantity.
For small $\epsilon$ (e.g., $\epsilon = 10^{-6}$ as used in this
work), this deviation is negligible whenever $|z_j| \gg \epsilon$,
and the two criteria yield near-identical influence landscapes.

\paragraph{Integrating Influence into Compression.}
Regardless of how influence is defined, all methods discussed above
yield per-element scores that must be integrated into the compression
objective.
Following established
practice~\citep{Yeom_2021,hatefi2024pruningexplainingrevisitedoptimizing},
influence is represented by its \emph{magnitude only}: the sign of
the Taylor term or LRP relevance indicates the \emph{direction} of
effect (beneficial vs.\ detrimental perturbation), but for guiding
compression we care only about the \emph{size} of the functional
impact, since reconstruction errors can have either sign.
\AIR{} does not use influence scores merely to rank or
select weights but instead \emph{scales} the approximation error
element-wise, mathematically representing the full first-order Taylor
product (not just one of its factors).
Concretely, influence magnitudes are accumulated over calibration
data and collected in the influence matrix
$\mathbf{I}_\mathbf{W} \in \mathbb{R}^{m \times n}$, then
normalized to unit mean per layer.
This normalization ensures that a weight with average influence
receives no modification (multiplicative factor of one), while
functionally critical weights are penalized more heavily for
reconstruction error.
The normalized influence enters the hybrid objective (Eq.~\ref{eq:hybrid_loss}) by reweighting each squared residual element-wise with $(1 + \delta \cdot i_{ij})$, where $\delta$ controls the influence weighting strength.
A detailed derivation of the resulting ALS update rules is provided
in Subsection~\ref{subsec:als_derivation}.

\subsubsection{Summary of functional awareness in SVD-based compression}
Table~\ref{tab:influence_summary} provides a comparative overview.
Forward-pass methods offer provably optimal local solutions (SVD-LLM) but lack prediction-awareness.
Loss-based backward-pass methods (Taylor, Hessian, Fisher) form a hierarchy of increasingly sophisticated, and increasingly approximated, characterizations of the global task loss.
LRP-based relevance is derived from a distinct foundation: it decomposes the model's prediction rather than the loss, avoids the Fisher approximation cascade, and provides element-wise influence with built-in noise suppression via the $\epsilon$-rule (which becomes more pronounced at larger $\epsilon$).
\AIR{} exploits this complementarity by combining activation-awareness (forward-pass, $\mathbf{X}$) with task-awareness (backward-pass influence signal, $\mathbf{I}_\mathbf{W}$, defaulting to LRP-$\epsilon$) in its hybrid objective (Eq.~\ref{eq:hybrid_loss}). The condensed ablation in Sec.~\ref{sec:experiments} (Table~\ref{tab:influence_metric_ablation}) shows that the \AIR{} mechanism is robust to the specific backward signal used to fill $\mathbf{I}_\mathbf{W}$.

\begin{table*}[t!]
\centering
\caption{Overview of functional-awareness strategies for SVD-based compression. ``Scope'' indicates local (layer) vs.\ global (model) awareness. For backward-pass influence metrics, ``Order'' refers to the derivative order of $\mathcal{L}$. The diagonal of the loss Hessian (OBD row) is computable via Hutchinson / Hessian-vector-product estimators (cf.\ AdaHessian, Sophia) but is substantially more expensive than diagonal Fisher and has not, to our knowledge, been used as an element-wise saliency for LLM compression. $^{\dagger}$LRP-$\epsilon$ inherits 1st-order Taylor character via the LRP-0 equivalence (Eq.~\ref{eq:lrp0_taylor_equiv}), with the $\epsilon$-stabilizer adding noise suppression.}
\label{tab:influence_summary}
\footnotesize
\setlength{\tabcolsep}{4pt}
\renewcommand{\arraystretch}{1.15}
\begin{tabularx}{\textwidth}{@{}l l c c c X c@{}}
    \toprule
    \textbf{Method} & \textbf{Type} & \textbf{Scope} & \textbf{Order} & \textbf{Data} & \textbf{What it captures} & \textbf{\makecell[c]{PPL at 60\%\\(as $\mathbf{I}$ in \AIR{})}} \\
    \midrule
    Vanilla SVD & -- & Local & -- & No & Weight reconstruction error & -- \\
    \midrule
    ASVD & Forward & Local & -- & Yes & Heuristic activation scaling & -- \\
    SVD-LLM & Forward & Local & -- & Yes & Optimal activation-aware reconstruction & -- \\
    \midrule
    Ones & -- & Global & 0th & No & Dummy attribution & 13.80 \\
    Weight magnitude & -- & Global & 0th & No & Parameter size & 14.02 \\
    \midrule
    Gradient & Backward & Global & 1st & Yes & Loss sensitivity & -- \\
    \makecell[l]{Taylor 1st order\\(Weight$\times$Gradient)} & Backward & Global & 1st & Yes & Estimated loss change from removal & 11.27 \\
    Diagonal Fisher & Backward & Global & Pseudo-2nd & Yes & Diagonal-Hessian curvature proxy & 11.27 \\
    Hessian (full) & Backward & Global & 2nd & Yes & Loss curvature & infeasible \\
    Hessian (OBD, diagonal) & Backward & Global & 2nd & Yes & Loss curvature, diagonal only & not evaluated \\
    \midrule
    LRP-$\epsilon$ (\AIR{} default) & Backward & Global & 1st$^{\dagger}$ & Yes & Functional contribution to $f(\mathbf{x})$ & \textbf{11.27} \\
    \bottomrule
\end{tabularx}
\end{table*}

\subsection{ALS Optimization}\label{subsec:als_derivation}

We provide the complete derivation of the closed-form updates for our influence-weighted ALS optimization. Algorithm~\ref{alg:RR} summarizes the resulting procedure as pseudocode.

\begin{algorithm}[ht]
\caption{AIR}
\label{alg:RR}
\begin{algorithmic}[1]
\REQUIRE Weight matrix $\mathbf{W}$, profiling matrix $\mathbf{S}$, influence matrix $\mathbf{I}$, influence weighting strength $\delta$, target rank $k$
\ENSURE Activation- and influence-aware compression $\mathbf{U}_k\mathbf{V}_k^\top$
\STATE Apply profiling: $\mathbf{W}' \leftarrow \mathbf{W}\mathbf{S}$
\STATE Initialize via SVD: $\mathbf{U}'_k, \boldsymbol{\Sigma}'_k, \mathbf{V}'_k \leftarrow \text{SVD}(\mathbf{W}', k)$
\STATE Compute $\mathbf{W}'_k \leftarrow \mathbf{U}'_k \boldsymbol{\Sigma}'_k \mathbf{V}'^\top_k$
\FOR{$r = k-1$ \textbf{down to} $0$}
    \STATE Compute residual: $\mathbf{E}_{r} \leftarrow \mathbf{W}' - \mathbf{W}'_k + \sigma'_r \mathbf{u}'_r \mathbf{v}'^\top_r$
    \STATE Update $\mathbf{v}'_r$ via Eq.~\eqref{eq:update_v}
    \STATE Update $\mathbf{u}'_r, \sigma'_r$ via Eq.~\eqref{eq:update_u}
    \STATE Update $\mathbf{W}'_k$ incrementally
\ENDFOR
\STATE Reverse profiling: $\mathbf{V}'^\top_k \leftarrow \mathbf{V}'^\top_k \mathbf{S}^{-1}$
\STATE Absorb singular values: $\mathbf{U}_k \leftarrow \mathbf{U}'_k \sqrt{\boldsymbol{\Sigma}'_k}$, $\mathbf{V}^\top_k \leftarrow \sqrt{\boldsymbol{\Sigma}'_k} \mathbf{V}'^\top_k$
\end{algorithmic}
\end{algorithm}

\paragraph{Problem formulation}

Expanding Eq.~\eqref{eq:hybrid_loss} element-wise, we aim to minimize:
\begin{equation}
\mathcal{L}(\mathbf{U}'_k, \boldsymbol{\Sigma}'_k, \mathbf{V}'_k) = \sum_{i,j} (1 + \delta \cdot i_{ij}) \bigl(\mathbf{W}'_{ij} - [\mathbf{U}'_k \boldsymbol{\Sigma}'_k \mathbf{V}'^\top_k]_{ij}\bigr)^2
\end{equation}
where $\delta \geq 0$ controls the influence weighting strength: when $\delta = 0$, we recover standard (unweighted) low-rank approximation; when $\delta > 0$, elements with higher influence $i_{ij}$ receive greater weight in the reconstruction loss.

The reconstruction can be written as a sum of rank-1 components:
\begin{equation}
\mathbf{W}'_k = \mathbf{U}'_k \boldsymbol{\Sigma}'_k \mathbf{V}'^\top_k = \sum_{r=0}^{k-1} \sigma'_r \mathbf{u}'_r \mathbf{v}'^\top_r
\end{equation}
where $\sigma'_r$ denotes the $r$-th singular value, and $\mathbf{u}'_r$, $\mathbf{v}'_r$ are the corresponding left and right singular vectors.

\paragraph{Coordinate descent strategy}

ALS optimizes one rank-$r$ component $(\sigma'_r, \mathbf{u}'_r, \mathbf{v}'_r)$ at a time while holding all others fixed. We define the residual when excluding component $r$:
\begin{equation}
\mathbf{E}^{(r)} = \mathbf{W}' - \sum_{j \neq r} \sigma'_j \mathbf{u}'_j \mathbf{v}'^\top_j = \mathbf{W}' - \mathbf{W}'_k + \sigma'_r \mathbf{u}'_r \mathbf{v}'^\top_r
\end{equation}

The optimization for component $r$ becomes:
\begin{equation}
\min_{\sigma'_r, \mathbf{u}'_r, \mathbf{v}'_r} \sum_{i,j} (1 + \delta \cdot i_{ij}) \bigl(\mathbf{E}^{(r)}_{ij} - \sigma'_r u'_{r,i} v'_{r,j}\bigr)^2
\end{equation}

\paragraph{Derivation of $\mathbf{v}'_r$ Update}

Fixing $\mathbf{u}'_r$ and $\sigma'_r$, we minimize with respect to $\mathbf{v}'_r$. The objective for element $v'_{r,j}$ is:
\begin{equation}
\mathcal{L}_{v'_{r,j}} = \sum_{i} (1 + \delta \cdot i_{ij}) \bigl(\mathbf{E}^{(r)}_{ij} - \sigma'_r u'_{r,i} v'_{r,j}\bigr)^2
\end{equation}

Taking the derivative and setting to zero:
\begin{equation}
\frac{\partial \mathcal{L}_{v'_{r,j}}}{\partial v'_{r,j}} = -2 \sum_{i} (1 + \delta \cdot i_{ij}) \bigl(\mathbf{E}^{(r)}_{ij} - \sigma'_r u'_{r,i} v'_{r,j}\bigr) \sigma'_r u'_{r,i} = 0
\end{equation}

Solving for $v'_{r,j}$:
\begin{align}
\sigma'_r \sum_{i} (1 + \delta i_{ij}) \mathbf{E}^{(r)}_{ij} u'_{r,i} &= \sigma'^2_r v'_{r,j} \sum_{i} (1 + \delta i_{ij}) (u'_{r,i})^2 \\[6pt]
v'_{r,j} &= \frac{\sum_{i} (1 + \delta i_{ij}) \mathbf{E}^{(r)}_{ij} u'_{r,i}}{\sigma'_r \sum_{i} (1 + \delta i_{ij}) (u'_{r,i})^2}
\end{align}

In matrix notation, this yields the closed-form update:
\begin{equation}
\mathbf{v}'_r = \left( \frac{\mathbf{u}'^\top_r \bigl((\mathbf{1} + \delta \cdot \mathbf{I}) \odot \mathbf{E}^{(r)}\bigr)}{\sigma'_r \cdot (\mathbf{u}'^2_r)^\top (\mathbf{1} + \delta \cdot \mathbf{I})} \right)^\top
\end{equation}
where the division is element-wise and $\mathbf{u}'^2_r$ denotes element-wise squaring. The transpose ensures $\mathbf{v}'_r$ is a column vector: the numerator computes a row vector where the $j$-th element is the influence-weighted inner product of $\mathbf{u}'_r$ with the $j$-th column of $\mathbf{E}^{(r)}$, and the denominator is a row vector of normalization factors for each column. This corresponds to Equation~\eqref{eq:update_v} in the main text.

\paragraph{Derivation of $\mathbf{u}'_r$ Update}

Fixing $\mathbf{v}'_r$, we minimize with respect to $\mathbf{u}'_r$ and $\sigma'_r$ jointly. The objective for the $i$-th row is:
\begin{equation}
\mathcal{L}_{i} = \sum_{j} (1 + \delta \cdot i_{ij}) \bigl(\mathbf{E}^{(r)}_{ij} - \sigma'_r u'_{r,i} v'_{r,j}\bigr)^2
\end{equation}

Let $\tilde{u}'_{r,i} = \sigma'_r u'_{r,i}$ denote the unnormalized left singular vector element. Taking the derivative with respect to $\tilde{u}'_{r,i}$ and setting to zero:
\begin{equation}
\frac{\partial \mathcal{L}_{i}}{\partial \tilde{u}'_{r,i}} = -2 \sum_{j} (1 + \delta \cdot i_{ij}) \bigl(\mathbf{E}^{(r)}_{ij} - \tilde{u}'_{r,i} v'_{r,j}\bigr) v'_{r,j} = 0
\end{equation}

Solving for $\tilde{u}'_{r,i}$:
\begin{align}
\sum_{j} (1 + \delta \cdot i_{ij}) \mathbf{E}^{(r)}_{ij} v'_{r,j} &= \sum_{j} (1 + \delta \cdot i_{ij}) \tilde{u}'_{r,i} (v'_{r,j})^2 \\
\tilde{u}'_{r,i} &= \frac{\sum_{j} (1 + \delta \cdot i_{ij}) \mathbf{E}^{(r)}_{ij} v'_{r,j}}{\sum_{j} (1 + \delta \cdot i_{ij}) (v'_{r,j})^2}
\end{align}

In matrix notation:
\begin{equation}
\tilde{\mathbf{u}}'_r = \frac{\bigl((\mathbf{1} + \delta \cdot \mathbf{I}) \odot \mathbf{E}^{(r)}\bigr) \mathbf{v}'_r}{(\mathbf{1} + \delta \cdot \mathbf{I}) (\mathbf{v}'^2_r)}
\end{equation}
where the division is element-wise and $\mathbf{v}'^2_r$ denotes element-wise squaring. Note that simplifying by canceling $\mathbf{v}'_r$ between numerator and denominator is not possible: since these are matrix-vector products, each resulting element $i$ involves distinct summations $\sum_j (\cdot) v'_{r,j}$ and $\sum_j (\cdot) (v'_{r,j})^2$, mixing different components of $\mathbf{v}'_r$. This corresponds to Equation~\eqref{eq:update_u} in the main text.

Finally, we extract the singular value and normalized left singular vector:
\begin{equation}
\sigma'_r = \|\tilde{\mathbf{u}}'_r\|_2, \qquad \mathbf{u}'_r = \frac{\tilde{\mathbf{u}}'_r}{\sigma'_r}
\end{equation}

This normalization ensures $\|\mathbf{u}'_r\|_2 = 1$, maintaining the standard SVD convention.

\paragraph{Convergence Properties and Justification for a Single ALS Sweep}

We begin with two formal statements that characterize the convergence behavior of the \AIR{} solution. The first establishes the optimality of the activation-aware initialization in the special case $\delta=0$; the second establishes monotone descent of the influence-weighted objective along the ALS sweep.

\begin{proposition}[Optimality of the activation-aware initialization]\label{prop:init}
At $\delta=0$, the influence-weighted objective $\mathcal{L}_{\text{act,infl}}$ (Eq.~\ref{eq:hybrid_loss}) reduces to the activation-aware objective $\mathcal{L}_{\text{act}}$ (Eq.~\ref{eq:act_loss}). The initialization $(\mathbf{U}'_k, \boldsymbol{\Sigma}'_k, \mathbf{V}'_k) \leftarrow \mathrm{SVD}(\mathbf{W}', k)$ is the global minimizer of $\mathcal{L}_{\text{act}}$ over all rank-$k$ factorizations.
\end{proposition}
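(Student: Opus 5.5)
The proof has two parts, one per claim of Proposition~\ref{prop:init}. The first is a direct substitution into Eq.~\ref{eq:hybrid_loss}. The second combines Eckart--Young with the observation that rank-$k$ factorizations and rank-$\le k$ matrices are the same set of candidates.

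\textbf{Reduction at $\delta=0$.} Setting $\delta=0$ in Eq.~\ref{eq:hybrid_loss} makes the weight $\sqrt{\mathbf{1}+\delta\cdot\mathbf{I}}$ equal to the all-ones matrix $\mathbf{1}$. The all-ones matrix is the identity for the Hadamard product. Equivalently, in the element-wise expansion every coefficient $(1+\delta i_{ij})$ equals one, so
\[
\mathcal{L}_{\text{act,infl}}=\sum_{i,j}\bigl(\mathbf{W}'_{ij}-[\mathbf{U}'_k\boldsymbol{\Sigma}'_k\mathbf{V}'^\top_k]_{ij}\bigr)^2=\|\mathbf{W}'-\mathbf{U}'_k\boldsymbol{\Sigma}'_k\mathbf{V}'^\top_k\|_F^2=\mathcal{L}_{\text{act}}.
\]
This holds for every $\mathbf{I}$, including entries that are large or zero. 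I will note that the argument only uses $\delta=0$ and places no requirement on $\mathbf{I}$.

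\textbf{Global optimality.} Every rank-$k$ factorization $\mathbf{U}'_k\boldsymbol{\Sigma}'_k\mathbf{V}'^\top_k$ is a matrix of rank at most $k$. Conversely, every matrix of rank at most $k$ admits such a factorization, namely its own truncated SVD, padded with zero singular values if its rank is below $k$. So minimizing $\mathcal{L}_{\text{act}}$ over factorizations is the same problem as minimizing $\|\mathbf{W}'-\mathbf{M}\|_F^2$ over $\mathbf{M}$ with $\mathrm{rank}(\mathbf{M})\le k$. By the Eckart--Young theorem, the truncated SVD $\mathbf{W}'_k=\sum_{r<k}\sigma_r\mathbf{u}_r\mathbf{v}_r^\top$ of $\mathbf{W}'$ attains the minimum value $\sum_{r\ge k}\sigma_r^2$. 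Hence the initialization $\mathrm{SVD}(\mathbf{W}',k)$ is a global minimizer of $\mathcal{L}_{\text{act}}$.

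I would also connect this to the original activation-aware loss. When $\mathbf{S}$ is a Cholesky factor with $\mathbf{S}\mathbf{S}^\top=\sum\mathbf{X}\mathbf{X}^\top$, we have $\|(\mathbf{W}-\hat{\mathbf{W}})\mathbf{X}\|_F^2=\|(\mathbf{W}-\hat{\mathbf{W}})\mathbf{S}\|_F^2$. Because $\mathbf{S}$ is invertible (positive definite Gram matrix), $\hat{\mathbf{W}}\mapsto\hat{\mathbf{W}}\mathbf{S}$ is a rank-preserving bijection. Optimality in the profiled space therefore transfers to the native space, as in SVD-LLM.

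\textbf{Main obstacle.} The proposition is essentially a citation of Eckart--Young, so the only delicate points are formal. The first is that the factorization is not unique: when singular values repeat, or at $\sigma_{k-1}=\sigma_k$, the minimizer need not be unique. The statement claims only that the initialization is \emph{a} global minimizer, which I will state explicitly rather than asserting uniqueness. The second is the invertibility of $\mathbf{S}$ used in the transfer to the native space. I will state that invertibility as an assumption (full-rank calibration covariance), or fall back on a pseudo-inverse argument if it fails.
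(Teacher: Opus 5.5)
Your proposal is correct and follows essentially the same route as the paper's proof: set $\delta=0$ so the weighting becomes the all-ones matrix (the Hadamard identity), which recovers $\|\mathbf{W}'-\mathbf{U}'_k\boldsymbol{\Sigma}'_k\mathbf{V}'^\top_k\|_F^2$, and then apply Eckart--Young to the fixed profiled weight $\mathbf{W}'$. Your extra points are refinements that the paper leaves implicit: identifying rank-$k$ factorizations with matrices of rank at most $k$, noting that the minimizer is only \emph{a} global minimizer (it is not unique under ties or reparameterization, even though the statement says ``the''), and transferring optimality back to the native space via invertibility of $\mathbf{S}$.
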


\begin{proof}
At $\delta=0$, $(\mathbf{1} + \delta \cdot \mathbf{I}) = \mathbf{1}$, so $\mathcal{L}_{\text{act,infl}} = \mathcal{L}_{\text{act}} \odot \mathbf{1} = \mathcal{L}_{\text{act}}$. The scalar objective minimized is $\sum_{ij}[\mathcal{L}_{\text{act}}]_{ij} = \|\mathbf{W}' - \mathbf{U}'_k\boldsymbol{\Sigma}'_k\mathbf{V}'^\top_k\|_F^2$. The Eckart--Young theorem~\cite{eckart1936approximation} guarantees that the truncated SVD provides the global minimum of this Frobenius-norm reconstruction error over all rank-$k$ matrices. Since the profiled weight $\mathbf{W}'$ is fixed and the optimization is over its rank-$k$ factorizations, $\mathrm{SVD}(\mathbf{W}', k)$ is the global minimizer.
\end{proof}

For $\delta > 0$, the joint optimization over $(\mathbf{U}'_k, \boldsymbol{\Sigma}'_k, \mathbf{V}'_k)$ is non-convex due to the product structure $\mathbf{U}'_k \boldsymbol{\Sigma}'_k \mathbf{V}'^\top_k$. Standard global-optimality arguments do not apply, but a coordinate-wise descent guarantee does:

\begin{proposition}[Monotone descent of the ALS sweep]\label{prop:monotone}
Let $({\mathbf{U}'_k}^{(0)}, {\boldsymbol{\Sigma}'_k}^{(0)}, {\mathbf{V}'_k}^{(0)}) = \mathrm{SVD}(\mathbf{W}', k)$. Define the sequence of iterates obtained by sweeping $r = k-1, k-2, \ldots, 0$ and applying Eq.~\ref{eq:update_v} followed by Eq.~\ref{eq:update_u} at each step. Then the influence-weighted objective $\mathcal{L}_{\text{act,infl}}$ is non-increasing along this sequence: for every step $r$,
\[
\mathcal{L}_{\text{act,infl}}\bigl((\mathbf{U}'_k, \boldsymbol{\Sigma}'_k, \mathbf{V}'_k)_{\text{after step }r}\bigr) \;\leq\; \mathcal{L}_{\text{act,infl}}\bigl((\mathbf{U}'_k, \boldsymbol{\Sigma}'_k, \mathbf{V}'_k)_{\text{before step }r}\bigr),
\]
with equality if and only if the corresponding component is already at the coordinate-wise optimum.
\end{proposition}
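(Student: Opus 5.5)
The plan is to show that the sweep is exact block-coordinate minimization of a fixed objective. Each half-step of Eqs.~\ref{eq:update_v}--\ref{eq:update_u} returns the exact minimizer of $\mathcal{L}_{\text{act,infl}}$ over one block, with all other variables held at their current values, so the objective cannot increase. Throughout, write $\omega_{ij} = 1+\delta\, i_{ij}$. Since $\mathbf{I}$ is an accumulated absolute value normalized to unit mean and $\delta\ge 0$, we have $\omega_{ij}\ge 1>0$. This positivity is the only structural fact I need about $\mathbf{I}$.

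First, I fix the step $r$. The total objective depends on component $r$ only through the residual, because
\[
\mathcal{L}_{\text{act,infl}} = \sum_{i,j}\omega_{ij}\bigl(\mathbf{E}^{(r)}_{ij}-\sigma'_r u'_{r,i}v'_{r,j}\bigr)^2,
\]
and $\mathbf{E}^{(r)}$ is independent of $(\sigma'_r,\mathbf{u}'_r,\mathbf{v}'_r)$. Hence minimizing the per-component problem from the coordinate-descent paragraph is the same as minimizing the full objective over that block. Components already updated earlier in the sweep ($r' > r$) enter only through $\mathbf{E}^{(r)}$, which is recomputed from the current iterate, so the comparison "before step $r$" versus "after step $r$" is between two points of the same function.

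Second, I treat the $\mathbf{v}'_r$-half-step. With $\sigma'_r,\mathbf{u}'_r$ fixed, the objective splits into a sum over $j$ of one-dimensional quadratics in $v'_{r,j}$. The leading coefficient of the $j$-th quadratic is $\sigma'^2_r\sum_i\omega_{ij}u'^2_{r,i}$. This coefficient is strictly positive whenever $\sigma'_r\neq 0$, because $\|\mathbf{u}'_r\|=1$ and $\omega_{ij}>0$. Each quadratic is therefore strictly convex, and the stationary point derived above (Eq.~\ref{eq:update_v}) is its unique global minimizer. Replacing the old $\mathbf{v}'_r$ by it gives a value no larger, with equality iff the old $\mathbf{v}'_r$ already was that minimizer.

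Third, I treat the $\mathbf{u}'_r$-half-step, reparametrizing by $\tilde{\mathbf{u}}'_r=\sigma'_r\mathbf{u}'_r$. The objective is again separable, now over rows $i$, with leading coefficient $\sum_j\omega_{ij}v'^2_{r,j}$. This is positive as long as $\mathbf{v}'_r\neq 0$, so Eq.~\ref{eq:update_u} is the unique minimizer over $\tilde{\mathbf{u}}'_r\in\mathbb{R}^m$. The current point $\sigma'_r\mathbf{u}'_r$ is one admissible value of $\tilde{\mathbf{u}}'_r$, so the value does not increase. The subsequent extraction $\sigma'_r=\|\tilde{\mathbf{u}}'_r\|$, $\mathbf{u}'_r=\tilde{\mathbf{u}}'_r/\sigma'_r$ leaves the product $\sigma'_r\mathbf{u}'_r\mathbf{v}'^\top_r$, and hence the objective, exactly unchanged.

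Chaining the two half-steps gives the inequality for step $r$. Equality holds iff both half-steps are stationary, i.e.\ iff the component is already a fixed point of the block-wise minimizations; this is what I take "coordinate-wise optimum" to mean. Induction over $r=k-1,\dots,0$ then gives monotonicity along the whole sweep. The initialization from $\mathrm{SVD}(\mathbf{W}',k)$ (Proposition~\ref{prop:init}) plays no role in the descent itself beyond supplying the starting point.

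The main obstacle is the degenerate case, where the closed forms divide by zero. This happens when $\sigma'_r=0$ (a zero singular value of $\mathbf{W}'$, possible when $\operatorname{rank}\mathbf{W}'<k$) or when $\tilde{\mathbf{u}}'_r=0$ or $\mathbf{v}'_r=0$ after an update. I would handle it by a convention: if a denominator vanishes, the component already contributes $\sigma'_r\mathbf{u}'_r\mathbf{v}'^\top_r=0$, and we leave it unchanged, which trivially gives equality. Alternatively, I would observe that the $\mathbf{v}'$-update really minimizes over the product $\sigma'_r\mathbf{v}'_r$, so the factor $1/\sigma'_r$ is only a scaling that the later $\mathbf{u}'$-step absorbs. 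I would also remark that the "iff" claim needs the strict convexity from the positivity of $\omega_{ij}$, and that it is stated per half-step rather than for a global minimizer, since the joint problem is nonconvex.
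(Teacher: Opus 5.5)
Your proposal is correct and follows essentially the same route as the paper's proof. Both treat the sweep as exact block-coordinate minimization: the objective is a strictly convex separable quadratic first in $\mathbf{v}'_r$ and then in $\tilde{\mathbf{u}}'_r=\sigma'_r\mathbf{u}'_r$, and the normalization $\sigma'_r=\|\tilde{\mathbf{u}}'_r\|_2$ leaves the rank-one term, and hence the objective, unchanged.

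Your write-up also makes explicit several points the paper leaves implicit:
\begin{itemize}
\item strict positivity via $\|\mathbf{u}'_r\|_2=1$ and $1+\delta\, i_{ij}\ge 1$;
\item the requirement $\mathbf{v}'_r\neq\mathbf{0}$ for the second half-step;
\item a convention for vanishing denominators;
\item reading ``coordinate-wise optimum'' as a fixed point of the two half-step minimizations, which is the reading under which the equality clause actually holds.
\end{itemize}
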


\begin{proof}
At step $r$, the update for $\mathbf{v}'_r$ holds $\mathbf{U}'_k$, $\boldsymbol{\Sigma}'_k$, and all $\mathbf{v}'_j$ with $j \neq r$ fixed. Under this restriction, $\mathcal{L}_{\text{act,infl}}$ is a strictly convex quadratic in $\mathbf{v}'_r$: the Hessian is the diagonal matrix with $j$-th entry $\sigma'^2_r \sum_i (1 + \delta \cdot i_{ij})(u'_{r,i})^2 > 0$ (positive whenever $\sigma'_r \neq 0$ and the influence weights are non-negative, both of which hold by construction). Setting the gradient to zero yields the unique minimizer given by Eq.~\ref{eq:update_v}; the derivation is carried out element-wise above. Hence the new iterate has objective value no greater than the previous one. The same argument applies to the joint update of $(\mathbf{u}'_r, \sigma'_r)$ via Eq.~\ref{eq:update_u}: with all other components fixed, the objective is a strictly convex quadratic in the unnormalized vector $\tilde{\mathbf{u}}'_r = \sigma'_r \mathbf{u}'_r$, and Eq.~\ref{eq:update_u} provides its unique minimizer. The subsequent normalization $\sigma'_r = \|\tilde{\mathbf{u}}'_r\|_2$, $\mathbf{u}'_r = \tilde{\mathbf{u}}'_r / \sigma'_r$ is a re-parameterization of the same point in the rank-1 component $\sigma'_r \mathbf{u}'_r \mathbf{v}'^\top_r$ and leaves the objective value unchanged. Composing the two updates per step over the $k$ steps of the sweep yields the stated monotone descent.
\end{proof}

\begin{remark}[Scope of the guarantee]
Proposition~\ref{prop:monotone} establishes monotone descent of the objective along the sweep, not convergence of the iterate sequence to a global minimum: the underlying problem is non-convex in the joint factorization $\mathbf{U}'_k \boldsymbol{\Sigma}'_k \mathbf{V}'^\top_k$. This is the standard guarantee for block-coordinate descent on non-convex objectives and matches the convergence statements available for related closed-form one-pass methods such as GPTQ~\cite{Frantar2023GPTQ}. Combined with Proposition~\ref{prop:init}, however, the result has practical force: \AIR{} starts at the global minimum of the unperturbed objective $\mathcal{L}_{\text{act}}$, and every subsequent step provably does not worsen the perturbed objective $\mathcal{L}_{\text{act,infl}}$.
\end{remark}

As shown in Table~\ref{tab:als_ablations} (Appendix~\ref{subsec:als_dynamics_ext}), the backward iteration order (from $r = k-1$ to $r = 0$) matters: it protects the dominant singular components (corresponding to larger singular values, i.e., smaller $r$ indices) from distortion, as the largest perturbations occur in the first iterations of the ALS when individual rank updates can be largest.

A natural question is whether multiple ALS sweeps would further improve compression quality. Our experiments show that while $\mathcal{L}_{\text{act,infl}}$ can continue to decrease marginally in subsequent sweeps, downstream perplexity does not improve. We identify three reasons why a single sweep suffices:

\textit{(1) Strong initialization from the unperturbed optimum.}
By Proposition~\ref{prop:init}, \AIR{} initializes from the global minimum of the activation-aware objective $\mathcal{L}_{\text{act}}$ (Eq.~\ref{eq:act_loss}), and by Proposition~\ref{prop:monotone} every subsequent ALS step does not increase the perturbed objective $\mathcal{L}_{\text{act,infl}}$ (Eq.~\ref{eq:hybrid_loss}). Since $\mathcal{L}_{\text{act,infl}}$ is a bounded multiplicative perturbation of $\mathcal{L}_{\text{act}}$, with $\mathbf{I}$ normalized to unit mean, a single coordinate descent sweep from this optimum captures the dominant correction. Subsequent sweeps would only adjust for second-order interactions between rank updates, which are small because the rank components are approximately orthogonal at initialization (being exact singular vectors of $\mathbf{W}'$). Empirically, these small additional adjustments slightly degrade downstream perplexity (Table~\ref{tab:als_ablations}) even as $\mathcal{L}_{\text{act,infl}}$ continues to decrease, consistent with mild overfitting to the local proxy on the calibration set: $\mathcal{L}_{\text{act,infl}}$ is computed from a finite calibration sample and serves as a surrogate for the end-task loss, so further refinement of the proxy beyond its dominant correction need not transfer to held-out perplexity.

\textit{(2) Structural analogy to GPTQ.}
Our single-sweep strategy parallels GPTQ~\citep{Frantar2023GPTQ}, which processes weight columns in a single left-to-right pass: at each column, it applies a closed-form optimal update (derived from inverse Hessian information), propagates the residual to remaining columns, and never revisits. Despite this single-pass design, and without formal convergence guarantees beyond empirical validation, GPTQ achieves near-optimal quantization up to 175B parameters. The parallel is direct: both methods decompose a matrix optimization into sequential subproblems over individual components (columns for GPTQ, ranks for \AIR{}), solve each exactly via closed-form expressions, and propagate the residual in a single sweep. Our setting is arguably more favorable, as \AIR{} starts from a provably optimal initialization rather than from the original pretrained weights.

\textit{(3) Empirical confirmation.}
We verified that additional ALS sweeps do not improve downstream perplexity, suggesting that the single sweep already captures all compression-relevant information from the influence signal.

\subsection{ALS dynamics: extended ablations and relevance properties}\label{subsec:als_dynamics_ext}

This appendix contains the relevance-property analysis that the main-body Sec.~\ref{subsec:als_dynamics} summarizes (the ALS ablation tables, Tables~\ref{tab:influence_metric_ablation} and~\ref{tab:als_ablations}, are reported in the main body).

\paragraph{Spatial relevance patterns.} Figure~\ref{fig:layer_block_14} provides a spatial view of the same observation made in main-text Figure~\ref{fig:scatter_correlation} (the disconnect between native-space magnitudes and attributed relevance), at the matrix level. The corresponding broader maps appear in Appendix~\ref{sec:appendix_spatial_patterns}: regions of high relevance often do not coincide with regions of high activation or weight magnitude.

\begin{figure}[h!]
\centering
\includegraphics[width=0.85\linewidth,trim={50 530 220 70},clip]{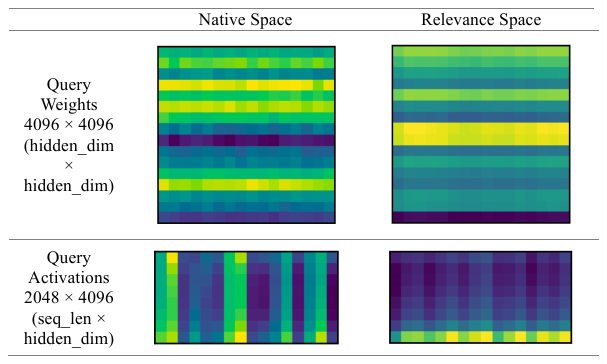}
\caption{Spatial heatmaps in native vs.\ relevance space, shown for the attention query weight and activation matrices in layer block 10 of LLaMA 7B. Elements are accumulated over 256$\times$256 blocks.}
\label{fig:layer_block_14}
\end{figure}

\subsection{Efficiency for token-generation through implementation}\label{subsec:implementation_efficiency}

SVD-based compression achieves proportional reductions in parameter count, FLOPs, and memory footprint (main-text Table~\ref{tab:compression_metrics_correlation}). However, for autoregressive token generation, the two deployment-critical metrics, \emph{peak GPU memory} and \emph{per-token latency}, do not immediately benefit in proportion. Instead, they require specific forward-pass optimizations and further depend on the operating regime (batch size, sequence length). This section analyzes these challenges, examines efficiency claims in prior work, and presents the optimizations that make SVD-based compression effective for token generation.

\subsubsection{Why standard SVD compression does not reduce peak memory or latency}\label{subsubsec:why_no_gains}

\paragraph{KV cache: no memory reduction without optimization.}
During autoregressive generation, the KV cache, not the weight matrices, dominates GPU memory for long sequences. Standard SVD-compressed models compute the two-stage projection $\mathbf{U}_k (\mathbf{V}_k^\top \mathbf{x})$ and expand the result back to full dimension before entering the KV cache, so the cache stores keys and values at the \emph{original} $n_{\text{kv}} \cdot d_h$ dimension per token. The weight compression thus yields zero cache reduction, and peak GPU memory remains dominated by the uncompressed KV cache.

\paragraph{Two-matmul overhead: no latency reduction.}
Replacing a single matrix multiplication $\mathbf{y} = \mathbf{W}\mathbf{x}$ with $\mathbf{y} = \mathbf{U}_k(\mathbf{V}_k^\top \mathbf{x})$ reduces the MAC count from $mn$ to $k(m+n)$, a factor $\frac{k(m+n)}{mn}$ that mirrors the parameter ratio. However, on modern GPUs, autoregressive decoding processes inputs of shape $(b, 1, d)$, placing inference in a \emph{memory-bandwidth-bound} regime where time is dominated by reading weights from HBM, not arithmetic. In this regime, two smaller weight reads are not necessarily faster than one contiguous read due to kernel launch overhead, reduced cache locality from intermediate materialization, and lower SM occupancy when individual GEMMs are too small to saturate the GPU. The theoretical FLOP reduction therefore does not translate proportionally, or at all, into wall-clock speedup.

This is consistent across the literature, though rarely stated explicitly. ACIP~\cite{genzel2025choosemodelsizecompression} is a notable exception: LLaMA-7B compressed to 40\% achieves only 1595 tokens/s vs.\ 2448 tokens/s uncompressed, a 35\% \emph{slowdown} despite 58\% fewer FLOPs. None of the SVD-based methods surveyed (ASVD~\cite{yuan2024asvdactivationawaresingularvalue}, SVD-LLM~\cite{svd_llm}, SVD-LLM V2~\cite{wang2025svdllmv2optimizingsingular}) provide fused CUDA kernels to bridge this gap.

\subsubsection{Scrutiny of efficiency claims in prior work}\label{subsubsec:misleading_claims}

\paragraph{Inflated throughput metrics.}
SVD-LLM's \texttt{eff\_eval()} computes throughput as $b \cdot l_{\max} / t_{\text{wall}}$, counting the maximum generation length regardless of early EOS termination. Compressed models produce EOS earlier due to degraded quality, so the metric counts \emph{phantom tokens} never generated, systematically inflating speedups for more compressed models.
On TinyLlama 1.1B at 60\% parameter rate, this metric reports a 2.65$\times$ speedup (322 vs.\ 855~tok/s). However, the base model generates 726/1024 tokens on average while the compressed model generates only 200/1024. Corrected throughput: 229~tok/s (base) vs.\ 167~tok/s (compressed); the compressed model is 27\% \emph{slower}, consistent with the memory-bandwidth-bound analysis above.

\paragraph{Unsubstantiated KV cache compression.}
Both ASVD~\cite{yuan2024asvdactivationawaresingularvalue} and SVD-LLM~\cite{svd_llm} discuss KV cache compression in their papers, but their implementations differ from the claims.
ASVD's \texttt{compress\_kv\_cache} flag restricts rank allocation search to K and V projections, but the \texttt{SVDLinear} module still computes $\mathbf{U}_k (\mathbf{V}_k^\top \mathbf{x})$ at full output dimension with no modified attention module, yielding no cache reduction.
SVD-LLM provides a dedicated module (\texttt{svd\_llama\_kvcache.py}) caching low-rank intermediates for \emph{both} keys and values, but their strategy re-expands both to full dimension before attention at every step, trading cache memory for per-step computation overhead (acknowledged in their limitations section). The published code contains errors (incorrect RoPE function signatures) and multiple \texttt{TODO} annotations, indicating a non-functional prototype.

\subsubsection{Our proposed optimizations}\label{subsubsec:our_optimizations}

\paragraph{Common practices.}
All SVD-based methods propagate hidden states sequentially through the decomposed factors rather than reconstructing the full weight matrix, i.e.\ $\mathbf{y} = \mathbf{U}_k (\boldsymbol{\Sigma}_k (\mathbf{V}_k^\top \mathbf{x}))$ instead of the naive $\mathbf{y} = (\mathbf{U}_k \boldsymbol{\Sigma}_k \mathbf{V}_k^\top) \mathbf{x}$. This avoids materializing the full $m \times n$ matrix and reduces the MAC count from $mn$ to $k(m+n)$, a factor matching the parameter rate. Additionally, the diagonal $\boldsymbol{\Sigma}_k$ is absorbed via $\mathbf{A} = \mathbf{U}_k \sqrt{\boldsymbol{\Sigma}_k}$, $\mathbf{B} = \sqrt{\boldsymbol{\Sigma}_k} \, \mathbf{V}_k^\top$, eliminating the middle matmul and reducing the forward pass to two matrix multiplications: $\mathbf{y} = \mathbf{A}(\mathbf{B}\mathbf{x})$. The balanced absorption distributes magnitudes evenly, which also benefits downstream quantization~\cite{yuan2024asvdactivationawaresingularvalue, svd_llm}.

\paragraph{Beyond common practices: KV cache and buffer optimizations.}\label{par:kv_cache_optimization}
We introduce three optimizations that target the KV cache bottleneck. All three route caching through an internal path that bypasses the standard HuggingFace KV cache, with a shared prerequisite structure.

\textit{(O1) RoPE pre-application (prerequisite).}
Rotary Position Embeddings are applied to keys immediately during cache writes, storing $\mathbf{k}_t = \mathbf{R}(t)\,\mathbf{U}^{(k)}_k(\mathbf{V}^{(k)T}_k \mathbf{x}_t)$. Without this, the standard code path attempts to re-apply RoPE to the full accumulated cache with position IDs corresponding only to the current token, producing incorrect key states. O1 is a prerequisite for O2 and O3.

\textit{(O2) Fused low-rank value caching.}
Values undergo no position-dependent transformation, so the intermediate $\mathbf{v}_t = \mathbf{V}^{(v)T}_k \mathbf{x}_t \in \mathbb{R}^{r_v}$ (with $r_v \ll n_{\text{kv}} \cdot d_h$) can be cached directly. The expansion through $\mathbf{U}^{(v)}_k$ is deferred and fused into the attention computation:
\begin{equation}
\text{Attn}(\mathbf{A}, \mathbf{v}_{1:s}) = \underbrace{\Bigl(\sum_{t=1}^{s} a_t \cdot \mathbf{v}_t\Bigr)}_{\in\, \mathbb{R}^{r_v}} \cdot \mathbf{U}^{(v)T}_k,
\end{equation}
where $\mathbf{A} \in \mathbb{R}^{n_h \times 1 \times s}$ are the attention weights. The weighted sum operates in $r_v$-dimensional space; only the final result is projected to full dimension. This reduces the per-token value cache from $n_{\text{kv}} \cdot d_h$ to $r_v$ elements.

Keys cannot benefit from analogous caching: RoPE is a position-dependent rotation $\mathbf{R}(n)$ in the full $d_h$-dimensional space, so low-rank key caching would require re-expanding and re-applying RoPE to all $s$ cached keys at every step ($\mathcal{O}(s \cdot r_k \cdot d_h)$ per layer). Absorbing RoPE into the projection is equally infeasible, as $\mathbf{R}(n) \cdot \mathbf{U}^{(k)}_k$ produces a position-specific $(d_h \times r_k)$-matrix requiring more storage than full-dimension keys.

\begin{center}
\scriptsize
\begin{tabular}{lcc}
\toprule
& \textsc{Standard} & \textsc{Optimized} \\
\midrule
Keys & $n_\text{kv}\cdot d_h$ & $n_\text{kv}\cdot d_h$ \\
Values & $n_\text{kv}\cdot d_h$ & $r_v$ \\
\midrule
Total & $2\cdot n_\text{kv}\cdot d_h$ & $n_\text{kv}\cdot d_h + r_v$ \\
\bottomrule
\end{tabular}
\end{center}

\textit{(O3) Pre-allocated cache buffers.}
Cache tensors are allocated once at the maximum sequence length and filled via in-place writes, avoiding per-token dynamic memory allocations. The optimized path also precomputes $\mathbf{U}^{(v)}_k$ reshaped per attention head, avoiding repeated reshape and repeat operations at each decode step.

\subsubsection{Ablation of forward-pass optimizations}\label{subsubsec:kv_ablation}

Table~\ref{tab:kv_ablation} reports the impact of each optimization on peak GPU memory and per-token latency during autoregressive generation (batch size 64, sequence length 512, LLaMA 7B at 100\% and 60\% parameter rate on a single A100 GPU), as rates relative to the base model. The first row is the compressed model on the default HuggingFace cache path; rows below add O1 (RoPE pre-application) as prerequisite to the alternative internal-cache path.

\begin{table}[t!]
\centering
\caption{Ablation of forward-pass optimizations on peak GPU memory and per-token latency. Each O1-prefixed row adds optimizations on top of the internal-cache path; the first row is the default HuggingFace cache path. The base model at BS=64, SL=512 (29.34\,GB peak memory, 1540\,$\mu$s/token) serves as reference (100\%).}
\label{tab:kv_ablation}
{\footnotesize
\renewcommand{\arraystretch}{1.05}
\begin{tabular}{l|cc|cc}
\toprule
& \multicolumn{2}{c|}{\textsc{Peak GPU Mem.\ Rate}} & \multicolumn{2}{c}{\textsc{Latency Rate}} \\
\cmidrule(lr){2-3}\cmidrule(lr){4-5}
\textsc{Param. Rate} & 100\% & 60\% & 100\% & 60\% \\
\midrule
Compressed, w/o opt. & 100.3\% & 84.0\% & 125.5\% & 117.3\% \\
O1                   & 100.4\% & 84.0\% & 124.5\% & 118.9\% \\
O1+O3                & 100.4\% & 84.0\% & 50.1\%  & 50.5\%  \\
O1+O2                & 86.3\%  & 64.3\% & 91.2\%  & 76.6\%  \\
\midrule
O1+O2+O3             & 86.4\%  & 64.2\% & 46.6\%  & 44.2\%  \\
\bottomrule
\end{tabular}}
\end{table}

O1 alone (switching from the default HuggingFace cache path to the internal cache path with RoPE pre-application) does not by itself improve latency, and at 60\% parameter rate slightly worsens it (latency rate $117.3\%\rightarrow118.9\%$), because the internal caching path lacks the optimized memory access patterns of HuggingFace's DynamicCache while providing no memory benefit at full dimension.
O2 provides the dominant memory reduction (from $84.0\%$ to $64.3\%$ at 60\% param rate, comparing O1 to O1+O2), confirming value cache compression as the primary driver.
O3 provides the dominant latency improvement (from $118.9\%$ to $50.5\%$ at 60\% param rate, comparing O1 to O1+O3), confirming that eliminating per-token dynamic allocations is the main driver of latency.
The full combination O1+O2+O3 achieves the best overall trade-off, demonstrating that memory and latency optimizations are complementary.

\paragraph{Implications.}
The absence of inference speedup under standard implementations does not diminish SVD-based compression: reductions in parameter count, memory footprint, and peak GPU allocation (Table~\ref{tab:compression_metrics_correlation}) matter for memory-constrained deployment, larger batch sizes, or combination with quantization. However, translating FLOP reductions into latency improvements requires systems-level co-design, as shown by our O1--O3 optimizations. In contrast to the unsubstantiated KV cache claims of prior work (\S\ref{subsubsec:misleading_claims}), our approach (i) caches values in low-rank form and \emph{avoids re-expansion} through fused attention, (ii) maintains keys at full dimension with RoPE correctly applied, and (iii) implements optimizations (weight precomputation, pre-allocated buffers) that make the strategy deployment-ready. These considerations apply to all SVD-based methods and are complementary to approximation quality, the focus of \AIR{}.

\subsubsection{End-to-end results: peak GPU memory and per-token latency}\label{subsubsec:end_to_end_efficiency}

\begin{figure}[t!]
    \centering
    \includegraphics[width=\linewidth]{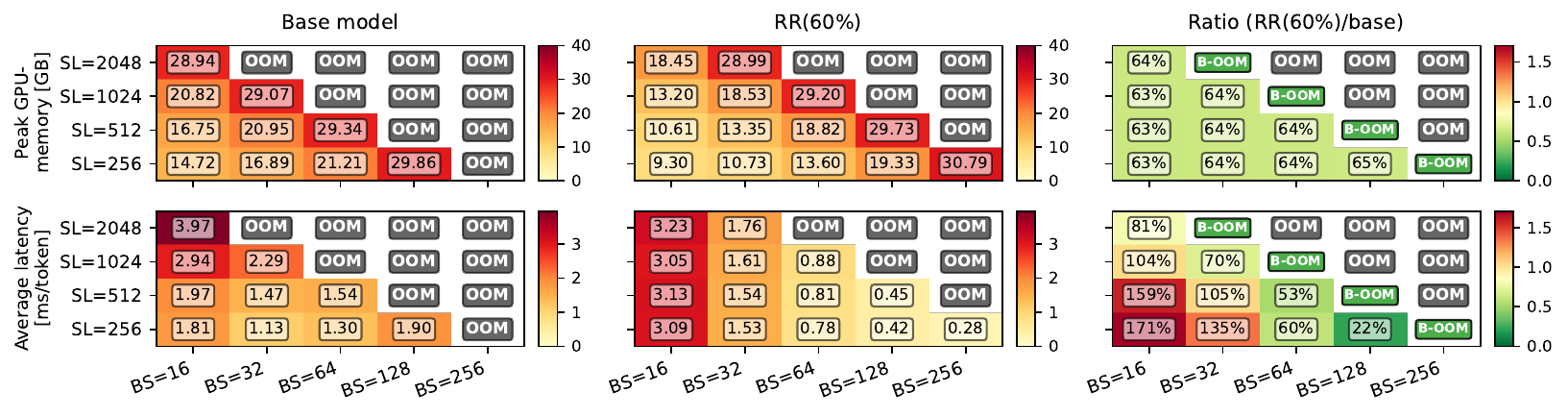}
    \caption{Extended runtime efficiency analysis (full version of Figure~\ref{fig:peak_mem_throughput_rate} in the main body): peak GPU-memory in GB (top row) and average per-token latency in ms/token (bottom row), reported in absolute units for the uncompressed base model (left column) and \AIR{}(60\%) (middle column), alongside the cell-wise ratio \AIR{}(60\%)/base (right column, reproduced from the main body). ``OOM'' marks cells that exceed the 40\,GB A100 memory budget; ``B-OOM'' marks cells where only the base model runs out of memory while \AIR{}(60\%) still fits.}
    \label{fig:peak_mem_throughput_rate_full}
\end{figure}

We now report end-to-end peak GPU memory and per-token latency for the full O1+O2+O3 \AIR{} pipeline against the uncompressed base model.

Figure~\ref{fig:peak_mem_throughput_rate_full} (the full version of the main-body Figure~\ref{fig:peak_mem_throughput_rate}, with absolute values for both the base model and \AIR{}(60\%)) presents peak GPU memory and average per-token latency for a scenario where a user prompts LLaMA 7B (compressed to 60\% parameter rate) with a query of 16 tokens and receives a response between 256 and 2048 tokens. The axes \textit{Tokens} and batch size \textit{BS} describe the data structure of the final response; since this structure builds up token by token during autoregressive generation, it also determines the model's cumulative input context at each decoding step and thus the peak memory and average per-token latency reported in the figure.

Peak GPU memory determines \emph{feasibility}: given a fixed GPU budget, it dictates which batch-size and sequence-length combinations can be served at all. Both the base model and \AIR{} scale their memory usage with batch size and sequence length at a similar rate, but \AIR{} operates at a lower level: the cell-wise ratio remains ${\sim}64\%$ throughout. This uniform offset means the base model hits the OOM boundary one diagonal earlier in the batch-size$\times$sequence-length grid, while \AIR{} can progress to the next larger configuration. For instance, at BS=64 with 2048 tokens, or BS=128 with 512 tokens, the base model runs out of memory while \AIR{}(60\%) completes.

The more deployment-relevant metric, however, is \emph{latency} (or, equivalently, throughput). In a real deployment, the reasoning is sequential: (1)~determine the available GPU memory budget (here, 40\,GB on an A100), (2)~estimate the expected response sequence length required to serve each user, and (3)~given these constraints, choose the batch size that minimizes per-token latency, equivalently, maximizes throughput, serving as many users as possible at the required response lengths. Peak GPU memory thus acts as a precondition that determines the feasible region in the batch-size$\times$sequence-length grid; latency then determines the optimal operating point within that region.

The cell-wise latency ratios (third column of Figure~\ref{fig:peak_mem_throughput_rate_full}) show two regimes. At large batch sizes and short sequences, \AIR{} achieves speedups (down to $22\%$ at BS=128, SL=256), as the reduced memory footprint alleviates bandwidth pressure and allows higher GPU utilization. At small batch sizes and short sequences, the two-matmul overhead of SVD-based compression dominates, and latency exceeds the base model's (up to $171\%$ at BS=16, SL=256).

A more deployment-relevant perspective emerges when we follow the practical strategy outlined above. Consider 256-token responses on a 40\,GB A100: for each model, we select the batch size that yields the lowest latency among those that fit in memory. The base model achieves its optimum at BS=64 with 0.81\,ms/tok; its latency \emph{increases} again at BS=128 (1.23\,ms/tok) due to memory pressure, whereas \AIR{}'s latency continues to decrease monotonically with batch size. \AIR{} fits BS=256 and reaches 0.29\,ms/tok, yielding a latency rate of $0.29/0.81 \approx 35\%$. This represents a different comparison from the cell-wise best of $22\%$ reported above: not a matched-configuration ratio, but the ratio between each model's respective best operating point given the same hardware constraints.

\subsection{Full perplexity, JS-divergence, and reasoning comparison}\label{subsec:full_perplexity_comparison}

Table~\ref{tab:perplexity_comparison_full} expands the condensed main-text Table~\ref{tab:perplexity_comparison} to all four parameter rates (80\%, 60\%, 40\%, 20\%), reports JS-divergence~\cite{jsd_khanal2024evaluatingimpactcompressiontechniques} alongside perplexity, and breaks down reasoning accuracy across all six tasks (OpenBookQA, ARC-E, WinoGrande, HellaSwag, PIQA, MathQA).

\begin{table*}[h!]
\centering
\caption{SVD-based compression of LLaMA 7B without enhancements: perplexity ($\downarrow$), JS-divergence vs.\ base ($\uparrow$, $0\%$ = identical, negative = divergence), and reasoning tasks ($\uparrow$) at varying parameter rates. Full version of main-text Table~\ref{tab:perplexity_comparison}.}
\label{tab:perplexity_comparison_full}
\resizebox{\textwidth}{!}{%
\scriptsize
\setlength{\tabcolsep}{3.5pt}
\begin{tabular}{l|l|cc|cc|ccccccc}
\toprule
    \textsc{Param.} & \textsc{Method} & \multicolumn{2}{c|}{\textsc{Perplexity} $\downarrow$} & \multicolumn{2}{c|}{\textsc{JS-divergence} $\uparrow$} & \multicolumn{7}{c}{\textsc{Reasoning tasks} $\uparrow$} \\
    \cmidrule(lr){3-4} \cmidrule(lr){5-6} \cmidrule(lr){7-13}
    \textsc{Rate} & & WikiText2 $\downarrow$ & C4 $\downarrow$ & WikiText2 $\uparrow$ & C4 $\uparrow$ & OpenB. $\uparrow$ & ArcE. $\uparrow$ & WinoG. $\uparrow$ & HellaS. $\uparrow$ & PIQA $\uparrow$ & MathQA $\uparrow$ & Average $\uparrow$ \\
\midrule
    \textcolor{gray}{100\%} & \textcolor{gray}{Base Model} & \textcolor{gray}{5.68} & \textcolor{gray}{7.34} & \textcolor{gray}{0\%} & \textcolor{gray}{0\%} & \textcolor{gray}{34\%} & \textcolor{gray}{75\%} & \textcolor{gray}{70\%} & \textcolor{gray}{57\%} & \textcolor{gray}{79\%} & \textcolor{gray}{27\%} & \textcolor{gray}{57\%} \\
\midrule
    & Vanilla SVD & 19438 & 16115 & -95\% & -94\% & 16.2\% & 27.4\% & 51.2\% & 26.3\% & 54.1\% & 21.3\% & 32.8\% \\
    & FWSVD & 22026 & 32048 & -96\% & -96\% & 15.4\% & 26.7\% & 49.3\% & 25.5\% & 53.3\% & 17.8\% & 31.3\% \\
    80\% & ASVD & 116 & 105 & -66\% & -62\% & 21.6\% & 41.3\% & 57.2\% & 32.6\% & 63.1\% & 20.9\% & 39.5\% \\
    & SVD-LLM(W) & 7.87 & 16.65 & -11\% & -23\% & 26.6\% & 64\% & 65.7\% & 43.3\% & 69.9\% & 23.2\% & 48.8\% \\
    \cmidrule{2-13}
    & \texttt{AIR} & \textbf{7.51} (\textcolor{DarkGreen}{$\downarrow$4.6\%}) & \textbf{14.24} (\textcolor{DarkGreen}{$\downarrow$14.5\%}) & \textbf{-8\%} (\textcolor{DarkGreen}{$\uparrow$27.3\%}) & \textbf{-19\%} (\textcolor{DarkGreen}{$\uparrow$17.4\%}) & 26.6\% & 66.3\% & 66.1\% & 45.3\% & 71.1\% & 24.1\% & \textbf{49.9\%} (\textcolor{DarkGreen}{$\uparrow$2.3\%}) \\
\midrule
    & Vanilla SVD & 52839 & 46630 & -98\% & -97\% & 15.4\% & 26\% & 52.7\% & 25.6\% & 52.6\% & 20.5\% & 32.1\% \\
    & FWSVD & 81838 & 111860 & -98\% & -98\% & 16.6\% & 24.8\% & 49.2\% & 25.8\% & 52.8\% & 17.3\% & 31.1\% \\
    60\% & ASVD & 4915 & 8103 & -92\% & -93\% & 13.4\% & 26.6\% & 49.4\% & 26.2\% & 53.6\% & 18.6\% & 31.3\% \\
    & SVD-LLM(W) & 13.81 & 56.33 & -25\% & -48\% & 20.6\% & 47.3\% & 55.4\% & 32.5\% & 61.2\% & 22.7\% & 40.0\% \\
    \cmidrule{2-13}
    & \texttt{AIR} & \textbf{11.27} (\textcolor{DarkGreen}{$\downarrow$18.4\%}) & \textbf{35.81} (\textcolor{DarkGreen}{$\downarrow$36.4\%}) & \textbf{-20\%} (\textcolor{DarkGreen}{$\uparrow$20.0\%}) & \textbf{-40\%} (\textcolor{DarkGreen}{$\uparrow$16.7\%}) & 19.2\% & 51.0\% & 59\% & 34.7\% & 63.7\% & 22.2\% & \textbf{41.6\%} (\textcolor{DarkGreen}{$\uparrow$4.0\%}) \\
\midrule
    & Vanilla SVD & 105082 & 105082 & -98\% & -97\% & 16.2\% & 26.4\% & 50.6\% & 25.5\% & 51.8\% & 20.3\% & 31.8\% \\
    & FWSVD & 134928 & 126754 & -98\% & -98\% & 16.4\% & 26.7\% & 49.2\% & 25.6\% & 52.9\% & 18.2\% & 31.5\% \\
    40\% & ASVD & 143631 & 134928 & -98\% & -98\% & 14.6\% & 25.4\% & 49.2\% & 25.6\% & 52.2\% & 21.1\% & 31.4\% \\
    & SVD-LLM(W) & 63.83 & 345 & -54\% & -72\% & 14.8\% & 30.1\% & 50.8\% & 27.3\% & 55\% & 21.5\% & 33.3\% \\
    \cmidrule{2-13}
    & \texttt{AIR} & \textbf{42.52} (\textcolor{DarkGreen}{$\downarrow$33.4\%}) & \textbf{277} (\textcolor{DarkGreen}{$\downarrow$19.7\%}) & \textbf{-48\%} (\textcolor{DarkGreen}{$\uparrow$11.1\%}) & \textbf{-69\%} (\textcolor{DarkGreen}{$\uparrow$4.2\%}) & 13.4\% & 32.2\% & 51.5\% & 27.5\% & 55.4\% & 21.7\% & \textbf{33.6\%} (\textcolor{DarkGreen}{$\uparrow$0.9\%}) \\
\midrule
    & Vanilla SVD & 196320 & 208981 & -98\% & -98\% & 16\% & 26.6\% & 50.2\% & 25.6\% & 54.1\% & 22\% & \textbf{32.4\%} (\textcolor{DarkGreen}{$\uparrow$2.2\%}) \\
    & FWSVD & 268337 & 208981 & -98\% & -98\% & 15.2\% & 25\% & 49.2\% & 25.5\% & 53.4\% & 19.1\% & 31.2\% \\
    20\% & ASVD & 24959 & 23447 & -96\% & -95\% & 11.4\% & 25.3\% & 51.6\% & 25.7\% & 52.9\% & 20.1\% & 31.2\% \\
    & SVD-LLM(W) & 854 & 8626 & -77\% & -84\% & 15.2\% & 25.8\% & 48.7\% & 26\% & 52\% & 20.2\% & 31.3\% \\
    \cmidrule{2-13}
    & \texttt{AIR} & \textbf{472} (\textcolor{DarkGreen}{$\downarrow$44.7\%}) & \textbf{2550} (\textcolor{DarkGreen}{$\downarrow$70.4\%}) & \textbf{-68\%} (\textcolor{DarkGreen}{$\uparrow$11.7\%}) & \textbf{-83\%} (\textcolor{DarkGreen}{$\uparrow$1.2\%}) & 15.2\% & 25.8\% & 49.7\% & 26.3\% & 52.9\% & 20.3\% & 31.7\% \\
\bottomrule
\end{tabular}
}
\end{table*}

\subsection{C4 as calibration data}

To verify that \AIR{}'s gains generalize beyond \textsc{WikiText-2} as the calibration source, we re-run the 60\% parameter-rate setting with C4 as the calibration set. \AIR{} reaches 19.47 perplexity vs.\ 34.17 for SVD-LLM(W), a 43\% improvement, confirming that the gains are not specific to the calibration distribution. Other parameter rates were not re-run for this calibration variant.

\subsection{Reasoning after LoRA Fine-Tuning}\label{subsec:reasoning_after_lora}

The main-text comparison of LoRA fine-tuning (Sec.~\ref{sec:experiments}, Table~\ref{tab:perplexity_comparison_lora}) reports perplexity. To verify that perplexity gains translate into downstream task performance, Table~\ref{tab:lora_reasoning} reports average zero-shot accuracy across the same reasoning benchmarks used in Table~\ref{tab:perplexity_comparison} (OpenBookQA, ARC-Easy, WinoGrande, HellaSwag, PIQA, MathQA). The accuracy ranking mirrors the perplexity ranking: \AIR{} + LoRA matches or outperforms SVD-LLM(W) + LoRA at every compression rate, with the gap widening at higher compression (\textbf{$\uparrow$5.3\%} at 20\% parameter rate). The advantage is largest where capacity is most constrained, the regime that matters most for deployment.

\begin{table}[t!]
\centering
\caption{Enhanced SVD-compression through LoRA fine-tuning for LLaMA 7B, measured by average zero-shot accuracy ($\uparrow$). Baseline average accuracy: 57\%.}
\label{tab:lora_reasoning}
{\footnotesize
\renewcommand{\arraystretch}{1.05}
\begin{tabular}{l|cccc}
\toprule
\textsc{Param. rate} & 80\% & 60\% & 40\% & 20\% \\
\midrule
SVD-LLM(W) + LoRA & \textbf{53.5} (\textcolor{DarkGreen}{$\uparrow$0.4\%}) & 47.6 & 40.5 & 32.3 \\
\midrule
\texttt{AIR + LoRA}
& 53.3
& \textbf{48.0} (\textcolor{DarkGreen}{$\uparrow$0.8\%})
& \textbf{40.8} (\textcolor{DarkGreen}{$\uparrow$0.7\%})
& \textbf{34.0} (\textcolor{DarkGreen}{$\uparrow$5.3\%}) \\
\bottomrule
\end{tabular}}
\end{table}

\subsection{Full retraining on the calibration set: a complementary end-to-end ablation}\label{subsec:retraining_discussion}\label{subsec:retraining}

The main-text Table~\ref{tab:perplexity_comparison_lora} compares \AIR{} and the prior state of the art under the most common end-to-end enhancement, LoRA fine-tuning on external data. This appendix probes a second end-to-end enhancement, \emph{full retraining of the compressed model on the calibration set itself}, and reports the cross-product with LoRA fine-tuning. The motivation is twofold. First, full retraining is rarely considered as a deployable post-compression step; we argue below why it is, in fact, deployable. Second, and more importantly for the central claim of the paper (the additivity reading of the design space), full retraining is the most exhaustive end-to-end use of the backward signal available to us, so the question of whether \AIR{}'s local backward-signal proxy still contributes once full retraining has been applied is a more demanding test than the LoRA comparison alone can provide.

\paragraph{Why full retraining qualifies as a deployable enhancement.}
A natural question is whether, given that \AIR{} already leverages the backward pass (and thus model-output information), one should simply perform \emph{full post-training} on the calibration data directly, foregoing the influence-metric approximation altogether.
Existing work rarely explores this option, presumably assuming it to be prohibitively expensive for LLMs and potentially harmful due to catastrophic forgetting.
We argue, however, that both concerns are overstated in the SVD-compression setting: a 7B-parameter model converges within 5--10 epochs on 256 calibration samples (WikiText-2) in about 12 minutes on an A100 GPU, and catastrophic forgetting is less of a concern after SVD truncation, which itself already disrupts the original weight structure.

We also reconsider a common categorization of post-compression methods by the binary criterion of \emph{whether a parameter update is applied or not}.
A more useful distinction is \emph{how much additional data and compute are required, and whether the contribution is complementary to other enhancements}.
Under this lens, full retraining on the same small calibration set uses no more data than the compression step itself.

\paragraph{Compositionality with \AIR{} and LoRA.}
Table~\ref{tab:perplexity_comparison_retraining} reports the cross-product of \AIR{} (vs.\ SVD-LLM) with the four enhancement combinations \{Default, LoRA on Alpaca, Retrain on WikiText-2, Retrain + LoRA\} at 40\% and 20\% parameter rates. At the rates we have measured, each of the three mechanisms (the local influence proxy through \AIR{}, full retraining on the calibration set, LoRA fine-tuning on Alpaca) appears to provide an independent gain on top of the others rather than substituting for them, and the three-way combination is the lowest-perplexity column for both \AIR{} and SVD-LLM. We treat this as a working hypothesis to be sharpened by filling the remaining cells (80\% and 60\%) and extending the comparison to additional model families; if it holds, it suggests that the two appearances of the backward signal in compression, as a per-element local influence score and as a parameter-update direction, are complementary contributions rather than redundant ones.

\begin{table}[t!]
\centering
\caption{Enhanced SVD-compression through retraining on given calibration data and/or fine-tuning on other data, measured by perplexity ($\downarrow$).}
\label{tab:perplexity_comparison_retraining}
{\footnotesize
\renewcommand{\arraystretch}{1.05}
\begin{tabular}{l|cccc}
\toprule
& \makecell{Default}
& \makecell{LoRA\\(Alpaca)}
& \makecell{Retrain\\(WikiText-2)}
& \makecell{Retrain(WikiText-2)\\+ LoRA(Alpaca)} \\
\midrule
SVD-LLM(W)(40\%)
& 63.8 & 16.82 & 13.59 & 13.22 \\
\texttt{AIR(40\%)}
& \textbf{42.5} (\textcolor{DarkGreen}{$\downarrow$33.4\%})
& \textbf{14.5} (\textcolor{DarkGreen}{$\downarrow$13.8\%})
& \textbf{12.77} (\textcolor{DarkGreen}{$\downarrow$6.0\%})
& \textbf{12.25} (\textcolor{DarkGreen}{$\downarrow$7.3\%}) \\
\midrule
SVD-LLM(W)(20\%)
& 854 & 114 & 98.9 & 94.76 \\
\texttt{AIR(20\%)}
& \textbf{472} (\textcolor{DarkGreen}{$\downarrow$44.7\%})
& \textbf{46.9} (\textcolor{DarkGreen}{$\downarrow$58.9\%})
& \textbf{47.4} (\textcolor{DarkGreen}{$\downarrow$52.1\%})
& \textbf{38.35} (\textcolor{DarkGreen}{$\downarrow$59.5\%}) \\
\bottomrule
\end{tabular}
}
\end{table}

\subsection{Sentence-completion after LoRA fine-tuning: A qualitative analysis of \AIR{}+LoRA vs.\ SVD-LLM(W)+LoRA}\label{subsec:sentence_completion}

Since LLaMA 7B is not fine-tuned as a chatbot, we decided to test for sentence completion rather than replying as an assistant to a user's question. Note that sentence completion is inherently open-ended; the model predicts plausible continuations rather than answering specific questions. As such, we consider topic shifts to be minor issues rather than indicators of model degradation, since the training data (books, articles, etc.) often contains such transitions that might be coherent in their original context. The outputs reported below are produced by the LoRA-fine-tuned variants of both methods (SVD-LLM(W)+LoRA and \AIR{}+LoRA, matching the setting of Table~\ref{tab:perplexity_comparison_lora}); for brevity we refer to them simply as SVD-LLM(W) and \AIR{} throughout the discussion.

Both prompts show a consistent pattern. At 80\% parameter rate, both SVD-LLM(W) and \AIR{} produce fluent, coherent English text with correct primary facts (e.g., Paris as capital), though SVD-LLM(W) shows factual errors (Eiffel Tower height, tallest structure claim) and a spelling mistake. At 60\% parameter rate, performance remains comparable with both methods producing interpretable output, though SVD-LLM(W) shows factual inaccuracies (department count) and awkward phrasing. At 40\% parameter rate, both methods show degradation: SVD-LLM(W) exhibits terminology errors (``Romantic'' instead of Romance) and factual underestimations, while \AIR{} produces multiple factual errors (most widely spoken language, UN status) alongside typographical issues (``Romanoic''). Both methods still generate coherent English sentences at this compression level.

At 20\% parameter rate, the methods diverge. SVD-LLM(W) produces empty responses with no tokens generated. \AIR{} continues to generate grammatically structured English sentences. \AIR{}'s content quality degrades with fabricated information (non-existent International Assistance Program), neologisms (``differentities''), and nonsensical continuations, but the model retains its language generation capability and syntactic structure. \AIR{} thus avoids catastrophic failure under high compression, though the quality of its outputs at this level is substantially degraded. At 20\% parameters, \AIR{} still correctly identifies ``Paris'' as the capital of France, while SVD-LLM(W) produces no output.

{\scriptsize
\setlength{\tabcolsep}{3pt}
\begin{longtable}{@{}cL{4.0cm}L{4.0cm}L{3.5cm}@{}}
\toprule
\multicolumn{4}{l}{\textbf{Combined comparison: SVD-LLM(W)+LoRA vs.\ \AIR{}+LoRA}} \\
\midrule
\multicolumn{4}{l}{\scriptsize\textbf{Color legend:}\;
\sethlcolor{errorred}\hl{Factual error}\;
\sethlcolor{grammarorange}\hl{Grammar/spelling issues}\;
\sethlcolor{collapsepurple}\hl{Complete collapse}} \\
\midrule
\textsc{Param. Rate} & \textsc{SVD-LLM(W)+LoRA} & \textsc{\AIR{}+LoRA} & \textsc{Comment} \\
\midrule
\endhead
\multicolumn{4}{l}{\textit{\textbf{Input-Sample: ``The capital of France is''}}} \\
\midrule
80\% & 
Paris. The Eiffel Tower is a famous landmark in Paris, and it was built by \hlerror{Gustave Eifel} in 1889. It stands at 324 meters tall and is \hlerror{the tallest structure in Europe}. & 
Paris. The capital city of the United States is Washington, D.C. The capitol building in Washington is called the Capitol Building. & 
Both correct on Paris. SVD-LLM: spelling error (Eifel$\rightarrow$Eiffel); Eiffel Tower not tallest structure in Europe. \AIR{}: topic shift to U.S. capitals; content itself is accurate. \\
\midrule
60\% & 
Paris. The French government is a democratic republic, with the President as head of state and the Prime Minister as \hlgrammar{the head minister}. France has \hlerror{56 departments}. & 
Paris. The French Revolution was a political upheaval that occurred in 1789 and lasted for several years... & 
Both correct on Paris. SVD-LLM: awkward phrasing (``head minister'' instead of head of government), wrong department count (actually 101 including overseas). \AIR{}: topic shift to French Revolution; content is historically accurate. \\
\midrule
40\% & 
Paris. The French language is a \hlgrammar{Romantic} language, and it has been used in many countries... with an estimated \hlerror{10 million speakers} in the United... & 
Paris. The French language is \hlerror{the most widely spoken language in Europe}, with an estimated \hlerror{120 million speakers} worldwide. It is also \hlerror{the official language of... the UN}. French is a \hlgrammar{Romanoic} language. & 
Both identify Paris. SVD-LLM: ``Romantic'' should be Romance; ``10 million speakers'' is implausible (too high if intended as US speakers, $\sim$1--2M; far too low if global, $\sim$280M). \AIR{}: ``Romanoic'' typo; false claim (English/Russian more widely spoken in Europe); 120M too low; French is one of 6 UN languages but not ``the'' official language. \\
\midrule
20\% & 
\hlcollapse{(empty, no tokens generated)} & 
Paris, which is the city of Paris. \hlerror{In 2015, the International Assistance Program (International Assistive System) was launched by the United States and Canada to assist the rescue of injured soldiers in the Middle East.} & 
SVD-LLM produces nothing. \AIR{} identifies Paris correctly but with tautological phrasing and completely fabricated content about a non-existent program. \\
\midrule
\multicolumn{4}{l}{\textit{\textbf{Input-Sample: ``The universe is''}}} \\
\midrule
80\% & 
a vast and complex system, and understanding it requires the development of sophisticated mathematical models. The goal of this course is to introduce students to the basic concepts of mathematics that are used in modeling physical phenomena. We will learn about... & 
a vast and complex system, and the study of it requires a great deal of knowledge. The more we know about our surroundings, the better we can understand them and make use of them. This is why science is so important. It... & 
Both coherent. SVD-LLM: shifts to academic course description. \AIR{}: smooth philosophical continuation. \\
\midrule
60\% & 
a complex system of interconnected processes, and the study of these processes is known as physics. Physics is often divided into several subfields: particle physics, astrophysics, cosmology, \hlgrammar{electromagnetic physics (electronics)}... & 
a complex system of interconnected processes, and the study of these processes is known as physics. The field of physics encompasses many subfields that focus on different aspects of physical phenomena. Some of the most well-known subdis... & 
Very similar responses. SVD-LLM: ``electromagnetic physics (electronics)'' is non-standard terminology (should be electromagnetism; electronics is engineering). \AIR{}: coherent, standard phrasing. \\
\midrule
40\% & 
so vast and complex. But there are some things that we know about, like gravity, light, and time. These basic laws of physics play a central role in our understanding of the universe. The fundamental laws: 1. Newton... & 
a vast and complex system of interconnected stars, planets, galaxies, and other objects. The Sun is the most visible star in our solar system, but it is only one among billions of stars that exist in the galaxy. The... & 
Both coherent and on-topic. SVD-LLM: minor grammatical issue (``so vast'' lacks explicit subject); otherwise reasonable. \AIR{}: factually accurate and well-structured. \\
\midrule
20\% & 
\hlcollapse{(empty, no tokens generated)} & 
a complex object, and it can be viewed as an array of different objects. Here are some examples of the various ways to see the \hlgrammar{differentities} of objects: \hlgrammar{a, a, b, c, d, or d}. These are all... & 
SVD-LLM: empty output. \AIR{}: neologism ``differentities'' (made-up word); nonsensical enumeration; but maintains grammatical structure and English-like output. \\
\bottomrule
\end{longtable}
}

\subsection{Spatial patterns in weight and relevance distributions}\label{sec:appendix_spatial_patterns}

\begin{figure*}[t!]
    \centering
    \vspace{1ex}
    \begin{flushleft}
        \small{\underline{Native Space}}
    \end{flushleft}
    \includegraphics[
        width=1.0\textwidth,
        page=8,
        trim={{0.19\paperwidth} {0.1\paperheight} {0.13\paperwidth} {0.07\paperheight}},
        clip
    ]{graphics/intra_layer_distributions/plot_Pseudo_Relevances_weight_space_grid_combined.pdf}
    
    \vspace{1ex}
    \begin{flushleft}
        \small{\underline{Relevance Space}}
    \end{flushleft}
    \includegraphics[
        width=1.0\textwidth,
        page=8,
        trim={{0.19\paperwidth} {0.1\paperheight} {0.13\paperwidth} {0.07\paperheight}},
        clip
    ]{graphics/intra_layer_distributions/plot_Relevances_weight_space_grid_combined.pdf}    
    \caption{
        Spatial heatmaps in layer block 10 of LLaMA 7B shown for weight magnitudes (upper figure) and relevance magnitudes (lower figure). Each sub-figure shows hidden states, weights, and activations (rows) across attention layers (Q,K,V,O) and MLP layers (Gate,Up,Down) (columns). In a simplified manner, activations A emerge based on hidden-states X and weights W through $A = X \cdot W^\top$. Due to the large matrix dimensions (up to $11008\times 4096$), we aggregate values over $512\times 512$ blocks, with yellow indicating high accumulations and purple indicating low accumulations.
    }
    \label{fig:relevance_distribution_inter_layer}
\end{figure*}

Figure~\ref{fig:relevance_distribution_inter_layer} shows spatial patterns in the distribution of functional influence for the LRP-$\epsilon$ default instantiation of $\mathbf{I}$ (the figure is computed from this specific signal). While weight magnitudes exhibit relatively uniform distributions across matrix dimensions, influence magnitudes show concentrated regions of high influence with structural patterns. This spatial concentration is most evident in MLP layers, where influence forms distinct bands and clusters rather than being uniformly distributed. These patterns indicate that functional influence is not randomly distributed but follows systematic spatial organization, providing a visual explanation for why \AIR{}'s influence-weighted penalty redistributes approximation error to less important regions of the weight matrix.

\clearpage

\end{document}